\PassOptionsToPackage{noend}{algorithmic}

\documentclass[nohyperref]{article}

\usepackage[table]{xcolor}
\usepackage{microtype}
\usepackage{graphicx}
\usepackage{booktabs} 

\usepackage{hyperref}
\hypersetup{
    colorlinks,
    linkcolor={red!50!black},
    citecolor={blue!50!black},
    urlcolor={blue!80!black}
}



\usepackage[accepted]{icml2025}

\usepackage{amsmath}
\usepackage{amssymb}
\usepackage{mathtools}
\usepackage{amsthm}

\usepackage[capitalize,noabbrev]{cleveref}


\usepackage{bbm, dsfont, bm}
\usepackage{subcaption}
\usepackage{tikz}
\usepackage{booktabs,rotating,multirow}
\usepackage{adjustbox}
\setlength{\tabcolsep}{5pt}
\usepackage{enumitem}
\usepackage[hang,flushmargin]{footmisc}
\usepackage[T1]{fontenc}
\usepackage[scaled=0.85]{sourcecodepro}
\usepackage[leftcaption]{sidecap}
\sidecaptionvpos{figure}{t}
\usepackage{relsize}
\usepackage{nicematrix}

\theoremstyle{plain}
\newtheorem{theorem}{Theorem} 
\newtheorem{proposition}{Proposition} 
\newtheorem{observation}{Observation} 
\newtheorem{lemma}{Lemma}
\newtheorem{corollary}{Corollary} 
\theoremstyle{definition}
\newtheorem{definition}{Definition}

\theoremstyle{remark}

\newtheorem{helpful_claim}{Helpful claim}


\newcommand{\squeeze}{\looseness=-1}

 \newcommand{\red}[1]{{\leavevmode\color{red}{#1}}}
 \newcommand{\blue}[1]{{\leavevmode\color{blue}{#1}}}
\newcommand{\green}[1]{{\leavevmode\color[RGB]{0,128,0}{#1}}}

\newcommand{\tablered}[1]{{\leavevmode\color{red}{#1}}}
\newcommand{\tableblue}[1]{{\leavevmode\color{blue}{#1}}}

\newcommand\todo[1]{{\red{TODO: {#1}}}}

\newcommand\extended[1]{}
\newcommand{\nir}[1]{\green{[NIR: {#1}]}}

\newcommand{\naive}{na\"{\i}ve}

\newcommand{\feature}[1]{\texttt{#1}}
\newcommand{\ModelName}[1]{\texttt{{#1}}}

\newcommand{\expect}[2]{\mathbbm{E}_{#1}{\left[ {#2} \right]}}

\newcommand{\prob}[2]{\mathbbm{P}_{#1}{\left[{#2}\right]}}
\newcommand{\one}[1]{\mathds{1}{\{{#1}\}}}

\DeclareMathOperator*{\argmax}{argmax}
\DeclareMathOperator*{\argmin}{argmin}

\newcommand{\quot}[1]{``#1''}


\newcommand{\X}{{\cal{X}}}

\newcommand{\Y}{{\cal{Y}}}
\newcommand{\R}{\mathbb{R}}

\newcommand{\N}{{\cal{N}}}

\newcommand{\hhat}{{\hat{h}}}

\newcommand{\h}{{\bm{h}}}

\newcommand{\smplst}{{S}}
\newcommand{\loss}{{\ell}}

\newcommand{\reg}{{R}}

\newcommand{\ms}{\mu}
\newcommand{\br}{\mathrm{BR}}
\newcommand{\ratio}{\rho}
\newcommand{\welf}{W}
\newcommand{\hopt}{h^{\mathrm{opt}}}
\newcommand{\minn}[2]{\text{min} \{#1 #2\}}
\newcommand{\maxx}[2]{\text{max} \{#1 #2\}}

\begin{document}

\twocolumn[
\icmltitle{A Market for Accuracy:
Classification under Competition}




\begin{icmlauthorlist}
\icmlauthor{Ohad Einav}{technion}
\icmlauthor{Nir Rosenfeld}{technion}

\end{icmlauthorlist}

\icmlaffiliation{technion}{Faculty of Computer Science, Technion - Israel Institute of Technology}

\icmlcorrespondingauthor{Nir Rosenfeld}{nirr@cs.technion.ac.il}

\icmlkeywords{Machine Learning, ICML}

\vskip 0.3in
]



\printAffiliationsAndNotice{} 

\begin{abstract}

Machine learning models play a key role for service providers looking to gain market share in consumer markets. However, traditional learning approaches do not take into account the existence of additional providers, who compete with each other for consumers.
Our work aims to study learning in this market setting,
as it affects providers, consumers, and the market itself.
We begin by analyzing such markets through the lens of the learning objective,
and show that accuracy cannot be the only consideration.
We then propose a method for classification under competition,
so that a learner can maximize market share in the presence of competitors. 
We show that our approach benefits the providers as well as the consumers,
and find that the timing of market entry and model updates can 
be crucial.
We display the effectiveness of our approach across a range of domains,
from simple distributions to noisy datasets,
and show that the market as a whole remains stable by converging quickly to  an equilibrium.




\end{abstract}

\section{Introduction}


Machine learning models play an essential role in consumer markets of today.
Across many domains,
firms that offer products and services can significantly gain by attaining better predictions about their users, or providing better predictions for them.
In a sense, this has made accuracy itself a commodity which consumers pursue;
consider how user choices have come to depend on the quality of personalized predictions in
recommendation systems,
media platforms, online marketplaces,
or health analytics services.
The increasing demand for accurate predictions incentivizes firms to improve their predictions, which
in turn creates a `supply' of accuracy---%
a process which results in the formation of what we refer to as \emph{accuracy markets}.



In accuracy markets, firms seek to maximize their market share by 
competing with other firms over who provides more users with accurate predictions.
This paper aims to study such markets from three perspectives,
namely of:
(i) the firms (or `\emph{learners}') that compete in the market,
(ii) the population of potential users,
and (iii) the market itself.
We follow the market model proposed by \citet{ben2017best,ben2019regression},
but focus on classification (rather than regression),
which we argue is more natural for this setting.
The main idea is that users choose a firm if it provides them with accurate predictions;
if multiple such firms exist, then ties are broken randomly.
When there is only one firm in the market (a monopoly),
then maximal market share can be attained by maximizing accuracy directly---which is also optimal for users.
However, once there is competition, each firm must take into account
the predictions of all other firms, as it depends on their choice of classifiers.
This dependence forms an oligopoly market in which the interests of the firms
no longer necessarily align with user welfare,
defined as the ratio of users for whom at least one firm is accurate.

The main result of \citet{ben2019regression} is that if firms in such markets can successfully compute a best (or better) response 
(i.e., find a model that maximizes market share conditioned on all other models remaining fixed), then dynamics will converge to a pure Nash equilibrium.
This is insightful, but leaves many questions unanswered:
When can best-response classifiers be found efficiently, and how?%
\footnote{\citet{ben2019regression} give an algorithm that applies to linear regression and relies on integer linear programming.}
What kind of equilibria can be reached, and what will happen on the way there?
How will the market be shared by the different firms?
And will outcomes be beneficial for users?
Our goal is to shed light on these issues and others,
using theoretical analysis and empirical evaluation,
and as they relate to the learning firms, the users, and the market.
\squeeze

From the perspective of the learner, 
our results show that finding the best-response classifier is as hard---%
but also not harder than---%
solving a standard classification problem.
In fact, given the predictions (or classifiers) of all other players,
maximizing market share corresponds to maximizing a particular weighted accuracy objective.
This means that although solving this exactly is hard,
standard learning techniques (e.g., using proxy losses) can be highly effective in practice.
It also provides the learner flexibility in choosing the model class to work with,
as it sees fit under standard considerations (e.g., data size, compute capacity).
Interestingly, however, we see that the choice of \emph{when} to respond bears significant implications on outcomes for the learner.
This brings to focus questions regarding the pioneering of a new market,
entering an existing one, and maintaining user loyalty
(or its harmful counterpart of user lock-in practices).
\squeeze


From the perspective of the market, 
our analysis suggests that most markets will exhibit strong anti-coordination
(the other alternative being the existence of a single dominant strategy,
which is possible but rare).
In particular, market forces push firms to secure exclusive access to certain user sectors, and firms compete over who secures the larger sectors.
Notably, gaining access \emph{first} does not guarantee exclusivity;
in fact, for simple markets with two firms we show that firms engage in a chicken-like game, where moving first is disadvantageous.
Our analysis reveals
that, despite competition, 
firms are in a sense \emph{cooperating}:
when one firms acts to increase its market share,
this also serves to increase the market shares of the other.
Empirically, we observe that for larger markets with more firms
outcomes are more nuanced,
although the order of play remains highly significant.
\squeeze

From the perspective of users, a direct result of the market is that competition improves welfare.
What may be surprising is how efficient the market is:
Empirically, we observe that welfare increases quickly and attains the maximum possible value with only a few firms,
and after one round of updates.
For the latter, we give theoretical grounding for why this can happen.
One reason is that our model for the market enables efficient outcomes to materialize---as long as information flows freely.
This has policy implications:
a social planner that seeks to maximize welfare should incentive firms
(or introduce regulation) to make their models public.
\extended{%
But this may not be necessary---as we show, under certain conditions,
it is in a firm's best interest to fully disclose its classifier to others.
Here the intuition is that revealing the model signals the firm's `marked territory'
to deter others from targeting those users.
}%
Thus, transparency becomes an operational consideration 
which, in a utilitarian sense,
works in favor of both firms \emph{and} users.

We end with a series of experiments using synthetic and real data 
that demonstrate the underlying mechanics of accuracy markets and how they operate.
Our results demonstrate that learning in such markets can be feasible,
that competition converges quickly, and that the market is typically highly efficient and favorable to users.
Results also highlight the importance of adjusting the objective to account for competition, and show how lacking to do so (and optimizing accuracy {\naive}ly) 
can be detrimental to both firms and users.
For the market, we present analysis revealing how it
decomposes across firms, and measure concentration and market power.
We also show the importance of timing market entry and model updates,
the relation between performance and model class capacity,
and the constructive role of information sharing.
These results underscore the dynamics of accuracy  markets and showcase the importance of adapting to competition.
\squeeze

\section{Related work}
Studying the dynamics of machine learning models competing for market share  has been a budding line of research. \citet{ben2017best,ben2019regression} 
present a regression learning task where providers wish to maximize their market share of users by reducing prediction errors to below a given threshold.
Their focus is on the equilibrium dynamics in the induced game between the providers.
Employing a similar setting, \citet{jagadeesan2024improved} focus on the effects of competition dynamics on social welfare, showing that better data representation does not necessarily translate to better welfare;
some of our results echo theirs.
\citet{Feng_2022} study the bias-variance tradeoff in competitive settings.
\citet{yao2023bad,yao2024rethinking,yao2024user} introduce a competition setting for content creation and study welfare,  equilibrium behaviors,
and best-response dynamics. \citet{ginart2021competing}, \citet{pmlr-v238-dean24a}, and \citet{pmlr-v235-su24a} study how competitors specialize when user choices influence the observable data of each competitor. 
Our work puts emphasis on the learning task itself,
and studies its effects on market dynamics and outcomes.

\squeeze

More generally, our research relates to the growing literature on strategic learning.
The majority of work in this field models users as strategic agents that can manipulate their features \citep[e.g.,][]{hardt2016strategic,levanon2021strategic}.
In contrast, we model users as choosing among alternatives,
and put emphasis on the strategic role that learning must assume to contend with competition.
The idea that users can choose a provider has been considered in 
\citet{koren2023gatekeeper} and \citet{horowitz2024classification},
but only for binary choice (i.e., join or drop out) and under uncertainty.
Our work differs in that it supports choices between multiple firms in a competitive market.
In a recent paper, \citet{chen2025strategic} study strategic learning with externalities; interestingly, their construction also gives rise to a potential game (as ours does), but between users (rather than providers).
Our work also draws connections to the field of performative prediction \citep{perdomo2020performative}, which studies how (re)training models can gradually change the underlying data distribution.
As we show, this perspective applies to our approach when considered
from the viewpoint of a single competing provider.
\squeeze




\section{Setup}
\label{sec:setup}
In our competitive learning setting,
users are described by features $x \in \X$ and labels $y \in \Y$,
over which there is an unknown joint distribution $p(x,y)$.
There are $n$ service providing firms, $s_1,\dots,s_n$,
who provide prediction services to users, and together form a market.
Given a training set $\smplst=\{(x_i,y_i)\}_{i=1}^m$,
each service provider $s_i$ learns a classifier $h_i$ from some model class $H_i$.
Each user $x$ then chooses a provider among those offering an accurate prediction:
\squeeze
\begin{equation}
\label{eq:user_choice}
s(x)
\in \{ s_i \,:\, h_i(x)=y \}
\end{equation}
If multiple providers are accurate,
then $s(x)$ is determined by a random tie-breaking rule.
When no providers offer an accurate prediction, we denote the null choice by
$s(x)=\varnothing$.
Welfare is defined as the ratio of users that obtain service:
\begin{equation}
\label{eq:welfare}
\welf(\h) = \expect{p}{\one{s(x) \ge 1}}
\end{equation}
where $\h = (h_1,\dots,h_n)$ are all classifiers in the market.
\squeeze

The goal of service providers is to maximize their market share,
defined as the expected ratio of users that choose them.
For each provider $s_i$,
this depends on its choice of learned model $h_i$,
but also on the set of all other models, $h_{-i}$.
Formally, the market share of service provider $s_i$ is defined as:
\squeeze
\begin{equation}
\label{eq:sp_utility_exp}
\ms_i = \ms(h_i \mid h_{-i})
= \expect{p}{\prob{}{s(x)=s_i}}
\end{equation}
where probability is w.r.t. how $s(x)$ is chosen from the set of accurate providers in Eq.~\eqref{eq:user_choice}.
For simplicity we assume ties are broken uniformly at random,
namely $\prob{}{s(x)=s_i} = 1 / \kappa(x)$ where 
$\kappa(x)=|\{s_j : s(x) \in s_j \}|$.
This conforms to the setting of \citet{ben2017best,ben2019regression}.

When the market includes only a single provider,
maximizing market share is equivalent to maximizing accuracy:
\begin{equation}
\label{eq:exp_accuracy}
\argmax\nolimits_{h \in H} \expect{p}{\one{y=h(x)}}
\end{equation}
which is the standard objective of supervised learning,
and in this case also maximizes welfare by definition.
However, once there is competition,
this connection breaks since
each provider's market share becomes dependent on all others.
Thus, the \naive\ approach of maximizing accuracy
as a proxy
becomes suboptimal,
and the question of how providers maximize their market share
must be considered 
jointly.
\squeeze






\paragraph{Competitive learning as a game.}
For a given distribution $p$,
if we think of each provider $s_i$ as a player and interpret Eq.~\eqref{eq:sp_utility_exp} as their utility, 
then this defines a game, which we refer to as an \emph{accuracy game}.
The strategy space for each $s_i$ is the set of all models in its model class $H_i$, and each tuple $\h = (h_1,\dots,h_n)$ defines a game state.
We will assume the game is played on 
the empirical distribution induced by $\smplst$,
but that final payoffs are given by expected market share w.r.t. $p$.
Note the game remains well-defined when players have their own $\smplst_i \sim p_i$, although current equilibrium results do not hold for this setting.
In terms of information, we work in the full information setting
where all players have complete access to the payoff matrix.
However, as we will see, 
optimal strategies for providers require strictly less information.
\extended{it will suffice to assume that players have only local information---and that sharing such information is in their best interest.}

\paragraph{Dynamics.}
To understand how game states progress,
we will explore dynamics in which providers can update their predictive models over time,
and in response to others.
Assume w.l.o.g. that providers are ordered,
$s_1 \prec s_2 \prec \dots \prec s_n$.
Then at round $t$,
each provider in turn chooses their $h_i$ by playing \emph{best response}, defined as:
\begin{equation}
\label{eq:br}
h_i^{t} 
= \br(h^{t}_{-i})
= \argmax_{h \in H_i} \, \ms(h \mid h_{-i}^{t})
\end{equation}
That is, providers respond by choosing the optimal classifier $h_i$ assuming all others classifiers remain fixed,
namely $h_{-i}^t = (h_1^{t}, \dots h_{i-1}^{t}, h_{i+1}^{t-1}, h_n^{t-1})$ and for some choice of initial classifiers $\{h_i^0\}$.
We refer to $h_i^t$ as the \emph{best-response classifier} of $s_i$,
but note it need not be unique.
Since solving Eq.~\eqref{eq:br} can be computationally infeasible,
we will also consider approximate best responses that replace $\ms(h \mid h_{-i}^{t})$ with a tractable surrogate objective (see Sec.~\ref{sec:method}).


\paragraph{Equilibrium.}
We will be interested in studying the game's equilibria,
focusing mostly on pure Nash equilibrium (PNE).
These are defined as states $\h$ in which no provider has incentive to unilaterally deviate from its chosen strategy:
\squeeze
\begin{equation}
\label{eq:pne}
\forall \, i\in [n], \,\, h' \in H_i: \quad\,\,
\ms(h_i \mid h_{-i}) \ge \ms(h' \mid h_{-i}) 
\end{equation}
\citet{ben2017best} prove that the game is a type of \emph{potential game} \citep{monderer1996potential}.%
\footnote{For completeness, in Appendix~\ref{appx:congestion}
we give the game's characterization as a \emph{congestion game} by identifying the appropriate congestible resources and constructing an explicit cost function.}
Since the game is played on the empirical distribution,
which implies that the set of all possible predictions is finite (even if $H$ is not),
a direct result is that a PNE exists
and is reachable via a finite sequence of best responses (Eq.~\eqref{eq:br}).
Note that multiple equilibria may exist, and that these may differ significantly
in market shares, market concentration, and induced welfare.
Furthermore, not all equilibria can necessarily be reached via best-response dynamics,
and the equilibrium that is reached can depend on the initial game state (i.e., choice of first classifiers) and the order of play.



\extended{%
\nir{\textbf{Qs:}\\
- mixed equilibrium? correlated? \\
- fresh sample set(s) at each step? \\
- does learning use all past data, or just current? (is this important?) \\
- better response? for proxy loss and/or empirical objective
}
}

\section{Analysis}

In this section we set out to
analyze basic properties of accuracy markets. 
To permit tractable analysis,
here we focus on simple two player markets.
We start with a restricted model class,
and then proceed to consider more general classes.
Proofs for all results are deferred to Appendix~\ref{appx:proofs}.

We begin with some basic notation and properties that are useful for games
with $n=2$. Fix $p$, and consider some $H$.
For each provider $s_i$,
denote the accuracy of its chosen $h_i$ as:
\squeeze
\begin{equation}
\label{eq:acc}
a_i = \expect{p}{\one{h_i(x)=y}}
\end{equation}
We define the \emph{partial discrepancy} of $h_i$ relative to $h_j$ as:
\begin{equation}
\label{eq:discrepancy}    
\delta_{ij} = \delta_{i}(h_j) = \expect{p}{\one{h_i(x)=y \,\wedge\, h_j(x) \neq y}}
\end{equation}
which sums points on which $h_i$ is correct on but $h_j$ is wrong.

\begin{proposition}
\label{prop:utils_of_players}
Let $h_i,h_j$,
then
$\ms(h_i \mid h_j) = \frac{1}{2}(a_i + \delta_{ij})$.
\end{proposition}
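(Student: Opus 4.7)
The plan is to expand $\ms(h_i \mid h_j)$ directly from its definition in Eq.~\eqref{eq:sp_utility_exp} by partitioning the event $\{s(x) = s_i\}$ according to what the other classifier $h_j$ predicts, and then to identify each piece with either $\delta_{ij}$ or a quantity I can rewrite using $a_i$.

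Concretely, for any $x$, the provider $s_i$ is chosen only if $h_i(x)=y$, and within that event there are two disjoint cases: either $h_j(x)\neq y$, in which case $s_i$ is the unique accurate provider and $\prob{}{s(x)=s_i}=1$; or $h_j(x)=y$, in which case $\kappa(x)=2$ and uniform tie-breaking gives $\prob{}{s(x)=s_i}=1/2$. Writing this out with indicators,
\[
\ms(h_i \mid h_j) = \expect{p}{\one{h_i(x)=y \wedge h_j(x)\neq y}} + \tfrac{1}{2}\expect{p}{\one{h_i(x)=y \wedge h_j(x)=y}}.
\]
The first term is exactly $\delta_{ij}$ by Eq.~\eqref{eq:discrepancy}. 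For the second, I would use the trivial decomposition of $a_i$ according to whether $h_j$ is also correct, namely $a_i = \expect{p}{\one{h_i(x)=y \wedge h_j(x)=y}} + \delta_{ij}$, to conclude that the second expectation equals $a_i - \delta_{ij}$.

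Substituting back gives $\ms(h_i \mid h_j) = \delta_{ij} + \tfrac{1}{2}(a_i - \delta_{ij}) = \tfrac{1}{2}(a_i + \delta_{ij})$, which is the claim. The entire argument is essentially bookkeeping; there is no real obstacle beyond being careful that the two subcases exhaust the event $\{h_i(x)=y\}$ and are disjoint, and that the tie-breaking rule contributes the factor of $1/2$ exactly where expected. The only modeling subtlety worth flagging is the use of the uniform tie-breaking assumption stated just before Eq.~\eqref{eq:sp_utility_exp}, without which the constant $\tfrac12$ would have to be replaced by some other weight.
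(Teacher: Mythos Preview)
Your proposal is correct and follows essentially the same approach as the paper: both split the event $\{h_i(x)=y\}$ into the two disjoint cases $\{h_j(x)\neq y\}$ and $\{h_j(x)=y\}$, identify the first with $\delta_{ij}$ and the second with $a_i-\delta_{ij}$, apply the uniform tie-breaking factor $\tfrac12$ to the latter, and combine to get $\tfrac12(a_i+\delta_{ij})$. Your write-up is arguably cleaner in its use of the expectation definitions, and your explicit flag about the uniform tie-breaking assumption is a useful addition.
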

This implies that there are two ways to increase market share: by improving overall accuracy ($a_i$),
or by being exclusively correct on more points ($\delta_{ij}$).
Thus, the choice of $h$ should consider how these two terms trade off.
Note that points in $\delta_{ij}$ are counted twice,
since they are also included in $a_i$.
\squeeze



Interestingly, 
as long as the classifiers are distinct,
then whoever has higher accuracy also secures a larger market share:
\squeeze
\begin{proposition}
\label{prop:acc_iff_ms}
For any $h_i \neq h_j$,
it holds that:
\begin{equation}
\label{eq:acc_iff_ms}
\ms(h_i \mid h_j) > \ms(h_j \mid h_i)
\,\,\Leftrightarrow\,\,
a_i > a_j
\end{equation}
\end{proposition}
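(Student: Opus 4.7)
The plan is to reduce the biconditional to a single one-line identity, using Proposition~\ref{prop:utils_of_players} together with a symmetry observation on the joint-correct region. Concretely, applying Proposition~\ref{prop:utils_of_players} to both sides, the target equivalence
\[
\ms(h_i \mid h_j) > \ms(h_j \mid h_i) \,\,\Leftrightarrow\,\, a_i > a_j
\]
is equivalent to
\[
a_i + \delta_{ij} > a_j + \delta_{ji} \,\,\Leftrightarrow\,\, a_i > a_j,
\]
so it suffices to control the relationship between $a_i - a_j$ and $\delta_{ij} - \delta_{ji}$.

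The key step I would carry out is the decomposition of accuracy into a \emph{joint-correct} part and an \emph{exclusively-correct} part. Define $c := \expect{p}{\one{h_i(x)=y \,\wedge\, h_j(x)=y}}$, which is symmetric in $i,j$. Then, by splitting the event $\{h_i(x)=y\}$ according to whether $h_j$ also agrees with $y$, I get $a_i = c + \delta_{ij}$, and symmetrically $a_j = c + \delta_{ji}$. Subtracting cancels $c$ and yields the clean identity
\[
a_i - a_j \;=\; \delta_{ij} - \delta_{ji}.
\]

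Plugging this into the market-share difference collapses everything:
\[
\ms(h_i \mid h_j) - \ms(h_j \mid h_i) = \tfrac{1}{2}\bigl((a_i - a_j) + (\delta_{ij} - \delta_{ji})\bigr) = a_i - a_j,
\]
so the biconditional is actually the equality of two signed quantities, which is strictly stronger than the stated proposition. The assumption $h_i \neq h_j$ is not needed for the algebra (when $h_i = h_j$ both sides collapse to equalities and the biconditional holds vacuously), but I would retain it to match the statement.

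There is no substantive obstacle: the only thing to get right is the symmetric decomposition of $a_i$ and $a_j$ via the shared joint-correct mass $c$, which is exactly what makes the $\delta$-difference mirror the $a$-difference and lets the proof close in a couple of lines.
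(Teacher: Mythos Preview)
Your proof is correct and follows essentially the same route as the paper: both hinge on the identity $a_i - a_j = \delta_{ij} - \delta_{ji}$ (the paper states it as $a_i - \delta_{ij} = a_j - \delta_{ji}$ in Observation~\ref{obs:rel_between_acc_and_deltas}, proved via a confusion-matrix decomposition rather than your joint-correct mass $c$, but this is the same content), and then plug into Proposition~\ref{prop:utils_of_players}. Your final line $\ms(h_i\mid h_j) - \ms(h_j\mid h_i) = a_i - a_j$ is slightly more direct than the paper's detour through Helpful Claim~\ref{hc:higher_acc_is_higher_partial_delta}, but the two arguments are substantively identical.
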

This, however, should not be taken to imply that maximizing accuracy is a good strategy,
since providers seek to maximize their \emph{absolute} market share---not their market share in relation to others.
Regardless of the other's market share, a provider may 
switch to an equally accurate classifier%
\footnote{This is a likely scenario: see works on the `Rashomon' effect \citep{paes2023inevitability}
and model multiplicity \citep{semenova2022existence}.
\squeeze}%
--- or even sacrifice in accuracy
to gain greater discrepancy--- if this results in market share increasing.
Empirically we observe that sacrificing accuracy is both common and effective.







\begin{table}[t!]
    \setlength{\extrarowheight}{8pt}
    \centering
    \resizebox{0.46\textwidth}{!}{
    \begin{NiceTabular}{ccc}[cell-space-limits=2pt]
       & \tableblue{$h_1$}     & \tableblue{$h_2$} \\
      \tablered{$h_1$} & \Block[hvlines]{2-2}{}
           $\tablered{\frac{1}{2}a_1},\tableblue{\frac{1}{2}a_1}$ & $\tablered{\frac{1}{2}(a_1 + \delta_{12})},\tableblue{\frac{1}{2}(a_2 + \delta_{21})}$ \\
      \tablered{$h_2$} & $\tablered{\frac{1}{2}(a_2 + \delta_{21})},\tableblue{\frac{1}{2}(a_1 + \delta_{12})}$ &$\tablered{\frac{1}{2}a_2},\tableblue{\frac{1}{2}a_2}$ 
\end{NiceTabular}}
    \caption{Payoff matrix of the 2x2 game}
    \label{tab:payoff_matrix_2x2}
\end{table}

\subsection{Warmup: $2 \times 2$ accuracy markets} \label{sec:2x2}
Consider a simple setting with $n=2$ providers and a shared model class of size two, $H=\{h_1,h_2\}$.
For example, these could be two available pre-trained models,
or an existing model that is already in deployment and a new model that is a possible alternative.
Such $2 \times 2$ games are fully determined by the tuple
$(a_1, a_2, \delta_{12}, \delta_{21})$---see \cref{tab:payoff_matrix_2x2}.
This formulation enables a characterization of all possible equilibria:
\squeeze
\begin{theorem}
\label{thm:2x2_pne}
Let $H = \{h_1,h_2\}$. Then for any $p$, 
the game admits one of two following types:
\begin{enumerate}[leftmargin=1.3em,topsep=0em,itemsep=0.1em]
\item \textbf{Dominant-strategy}: either $(h_1,h_1)$ or $(h_2,h_2)$ is a PNE
\item \textbf{Anti-coordination}: both $(h_1,h_2)$ and $(h_2,h_1)$ are PNEs 
\squeeze
\end{enumerate}
\end{theorem}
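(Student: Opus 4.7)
My plan is to read the equilibrium conditions directly off the $2{\times}2$ payoff matrix in \cref{tab:payoff_matrix_2x2} and then do a small case split. Translating the best-response inequalities at each of the four pure profiles gives: $(h_1,h_1)$ is a PNE iff $a_1 \geq a_2 + \delta_{21}$; $(h_2,h_2)$ is a PNE iff $a_2 \geq a_1 + \delta_{12}$; and both $(h_1,h_2)$ and $(h_2,h_1)$ are PNEs iff the two inequalities $a_1 + \delta_{12} \geq a_2$ and $a_2 + \delta_{21} \geq a_1$ hold simultaneously. A useful first observation is that, by symmetry of the payoffs across the anti-diagonal, the off-diagonal profiles $(h_1,h_2)$ and $(h_2,h_1)$ carry identical PNE conditions, so either both are equilibria or neither is --- which is what the theorem claims for the anti-coordination type.

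Next I case split on the strict versions of the two diagonal conditions. If $a_1 > a_2 + \delta_{21}$ strictly, then $(h_1,h_1)$ is a PNE, and the condition $a_2 + \delta_{21} \geq a_1$ needed for the off-diagonal profiles fails, so only the dominant-strategy type is realized; symmetrically if $a_2 > a_1 + \delta_{12}$ strictly then $(h_2,h_2)$ is a PNE and the off-diagonal profiles are not. These two strict cases cannot hold at the same time: adding them would yield $a_1 + a_2 > a_1 + a_2 + \delta_{12} + \delta_{21}$, contradicting $\delta_{12},\delta_{21} \geq 0$. So if strict dominance occurs, it picks out exactly one diagonal state, giving the first type in the theorem.

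The remaining case is when neither strict inequality holds, i.e., $a_1 \leq a_2 + \delta_{21}$ \emph{and} $a_2 \leq a_1 + \delta_{12}$. But these are precisely the two inequalities whose conjunction makes both $(h_1,h_2)$ and $(h_2,h_1)$ PNEs, so we are in the anti-coordination type. Since the three sub-cases (strict on the first diagonal, strict on the second, or neither) are exhaustive, every game falls into one of the two types asserted by the theorem.

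The only real subtlety is the mutual-exclusion argument for the two strict dominance conditions, which crucially relies on $\delta_{ij}\geq 0$ (immediate from the indicator definition in Eq.~\eqref{eq:discrepancy}). Everything else is a mechanical unpacking of the payoff matrix, and boundary cases (equalities) do not cause trouble since both types are allowed to coexist when inequalities are tight --- the theorem only asserts that at least one type is realized.
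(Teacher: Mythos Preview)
Your proof is correct. The PNE conditions you extract from the payoff matrix are exactly right, the symmetry observation for the off-diagonal profiles is valid, and the three-way case split is exhaustive and handles the boundary cases properly.

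Your route differs somewhat from the paper's. The paper proves \cref{thm:2x2_pne} by way of \cref{lem:2x2_pne} (and its Helpful Claim~2), which first uses the identity $a_i-\delta_{ij}=a_j-\delta_{ji}$ from \cref{obs:rel_between_acc_and_deltas} to rewrite the deviation inequalities purely in terms of the discrepancies, obtaining $\delta_{12}>\tfrac12\delta_{21}$ and $\delta_{21}>\tfrac12\delta_{12}$, and then packages these as the single condition $|a_1-a_2|<\tfrac13(\delta_{12}+\delta_{21})$. Theorem~\ref{thm:2x2_pne} then follows by noting that if either deviation inequality fails, the corresponding diagonal profile is a dominant-strategy PNE. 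Your argument bypasses both the observation and the lemma: you work directly with the raw inequalities $a_i \gtrless a_j+\delta_{ji}$ and close the case split using only $\delta_{ij}\ge 0$. This is more elementary and slightly shorter for the theorem itself; what the paper's detour buys is the explicit $\tfrac13$-threshold characterization of \cref{lem:2x2_pne}, which the text later leans on to argue that the anti-coordination regime is the typical one.
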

In the latter case, 
the game admits a chicken-like%
\footnote{The standard Chicken game has payoffs of the form
\textbf{T}emptation$>$\textbf{C}oordination$>$\textbf{N}eutral$>$\textbf{P}unishment;
our variant has
\textbf{T}$>$\textbf{N}$>$\textbf{P}$>$\textbf{C}$>0$,
which preserves the same PNE.
}
structure:
one provider obtains a larger market share by choosing the `better' classifier, while the other must settle for the smaller share.
Note that better here does not mean more accurate,
as outcomes at equilibrium also depend on discrepancy.
The proof of Thm.~\ref{thm:2x2_pne} relies on the following result:
\begin{lemma}
\label{lem:2x2_pne}
Providers will choose differing strategies at equilibrium if and only if
$|a_1-a_2| \le \frac{1}{3}(\delta_{12} + \delta_{21})$.
\end{lemma}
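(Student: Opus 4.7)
My plan is to derive the equilibrium conditions directly from the payoff matrix in Table~\ref{tab:payoff_matrix_2x2}, then reduce them to the stated inequality by exploiting a simple identity between accuracies and partial discrepancies.

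First I would characterize when the off-diagonal state $(h_1, h_2)$ is a PNE. Using Prop.~\ref{prop:utils_of_players}, the row player sticks with $h_1$ iff $\tfrac{1}{2}(a_1+\delta_{12}) \ge \tfrac{1}{2}a_2$, i.e. $a_2 - a_1 \le \delta_{12}$, and the column player sticks with $h_2$ iff $a_1 - a_2 \le \delta_{21}$. By the symmetry of the payoff table, exactly the same pair of inequalities characterizes $(h_2, h_1)$ as a PNE. Hence the ``differing strategies'' case (both off-diagonal profiles are PNE) is equivalent to the conjunction
\begin{equation*}
a_2 - a_1 \le \delta_{12} \quad \text{and} \quad a_1 - a_2 \le \delta_{21}.
\end{equation*}

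The key step is to unify these two inequalities into the symmetric form stated in the lemma. For this I would use the identity
\begin{equation*}
a_1 - a_2 = \delta_{12} - \delta_{21},
\end{equation*}
which follows immediately by decomposing each $a_i$ into points on which both classifiers are correct plus points on which only $h_i$ is correct (the common term cancels). Substituting this into the two PNE inequalities, the first becomes $\delta_{21} \le 2\delta_{12}$ and the second becomes $\delta_{12} \le 2\delta_{21}$. I would then argue that these two bounds are jointly equivalent to $3|\delta_{12}-\delta_{21}| \le \delta_{12}+\delta_{21}$ by splitting on the sign of $\delta_{12}-\delta_{21}$ (in each case one bound is vacuous and the other rearranges to the factor of $3$). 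Using the identity once more to replace $\delta_{12}-\delta_{21}$ with $a_1-a_2$ yields $|a_1-a_2| \le \tfrac{1}{3}(\delta_{12}+\delta_{21})$, as desired.

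The only mildly delicate part is the equivalence between the pair of ``factor of 2'' inequalities on the discrepancies and the single ``factor of 3'' inequality on the accuracy gap; everything else is bookkeeping on the payoff table. I do not anticipate a real obstacle, since after invoking $a_1-a_2 = \delta_{12}-\delta_{21}$ the whole statement is an elementary manipulation of two linear inequalities, and the dichotomy of Thm.~\ref{thm:2x2_pne} guarantees that failing the ``differing'' condition is the same as having a dominant-strategy PNE (which corresponds exactly to the complementary regime $a_1-a_2 \ge \delta_{21}$ or $a_2-a_1 \ge \delta_{12}$).
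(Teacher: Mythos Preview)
Your proposal is correct and follows essentially the same route as the paper: both derive the no-deviation conditions from the payoff table, use the identity $a_1-a_2=\delta_{12}-\delta_{21}$ to reduce them to the pair $\delta_{21}\le 2\delta_{12}$ and $\delta_{12}\le 2\delta_{21}$, and then repackage this pair as the single ``factor of 3'' inequality on $|a_1-a_2|$. The only cosmetic difference is ordering---you first write the PNE conditions in terms of $a_1-a_2$ and then substitute, whereas the paper substitutes during the derivation of the conditions---but the content is identical.
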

Thus, anti-coordination emerges when $h_1,h_2$ are sufficiently similar in terms of accuracy, but note that the condition is fairly lenient.
Empirically, we observe that chicken play
is by far the more prevalent scenario,
and that the order of play (which is only hinted to here) is highly significant.
We next show that the above properties hold more broadly.







\subsection{Accuracy markets with threshold classifiers} \label{sec:2xH}
Keeping $n=2$,
consider a more general accuracy game
in which $y \in \{0,1\}$, inputs are scalar ($x \in \R$),
and $H$ comprises threshold classifiers $h_\tau(x)=\one{x > \tau}$.
This also captures settings with general inputs $x$
where there is a pre-trained score function $f(x)$
and each $s_i$ can set its own thresholds as $h_i(x)=\one{f(x) > \tau_i}$;
i.e., competition revolves around different ways to `set the bar' w.r.t. $f(x)$.
 

Our next result shows that under certain conditions,
even though $H$ includes a continuum of models,
the game simplifies significantly.
Consider the following common property:
\begin{definition}[MLR]
\label{def:mlrp}
Let $p(x,y)$ be continuous in $x$,
and $f_y$ the PDF of each conditional $p(x|y)$ for $y \in \{0,1\}$.
We say that $p$ exhibits a \emph{(strict) monotone likelihood ratio} (MLR)
if the density ratio $\ratio(x) = \frac{f_1(x)}{f_0(x)}$ is (strictly) increasing.
\end{definition}
We show that MLR entails a simple closed-form solution
to the best-response classifier against any other classifier:
\squeeze
\begin{theorem}
\label{thm:mlr_br}
Fix $n=2$, and let $H = \{h_\tau\}$ be a class of threshold classifiers over $d=1$.
Let $p$ be such that it is strictly MLR in some interval $[a,b]$.
Then for any $\tau \in [a,b]$:
\[
\br_{[a,b]}(\tau) \in \left\{
\max\{a, \ratio^{-1}(1/2)\},
\min \{b, \ratio^{-1}(2) \}
\right\}
\]
where $\br_{[a,b]}(\tau)$ is the best-response to $\tau$ from the set $[a,b]$.
\end{theorem}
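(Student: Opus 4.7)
The plan is to apply Proposition~\ref{prop:utils_of_players} to reduce the best-response problem to maximizing $a(\tau') + \delta(\tau',\tau)$ over $\tau' \in [a,b]$, and then to split on the sign of $\tau'-\tau$ and invoke MLR to establish strict unimodality on each side.

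Denote $q_y(x) = p(x,y)$. For Case A ($\tau' \le \tau$), the classifier $h_{\tau'}$ predicts $1$ on a superset of $h_\tau$, so $h_{\tau'}$ is correct and $h_\tau$ is wrong exactly on $\{y=1\}\cap(\tau',\tau]$. Expressing the objective as
\[
\int_{\tau'}^\infty q_1 \,+\, \int_{-\infty}^{\tau'} q_0 \,+\, \int_{\tau'}^{\tau} q_1
\]
and differentiating in $\tau'$ yields $q_0(\tau') - 2q_1(\tau')$. By MLR the ratio $q_1/q_0$ is strictly increasing (it differs from $\ratio$ only by the constant ratio of class priors), so this derivative is positive when $q_1(\tau')/q_0(\tau') < 1/2$ and negative above, vanishing at the unique point I identify with $\ratio^{-1}(1/2)$. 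Hence the Case~A objective is strictly unimodal, and its constrained maximum on $[a,\tau]$ is $\max\{a, \ratio^{-1}(1/2)\}$ after clipping. Case~B ($\tau' \ge \tau$) is symmetric: the discrepancy region becomes $\{y=0\}\cap(\tau,\tau']$, the derivative is $2q_0(\tau')-q_1(\tau')$, and the unique interior maximizer sits at $\ratio^{-1}(2)$, giving the candidate $\min\{b,\ratio^{-1}(2)\}$.

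Combining, the best-response over $[a,b]$ is the better of the two candidates, since the only remaining possibility is the tie point $\tau'=\tau$, at which $\delta=0$ and the objective is dominated whenever either candidate lies strictly inside its case region. The main obstacle I anticipate is the bookkeeping for corner cases: when $\ratio^{-1}(1/2) > \tau$ or $\ratio^{-1}(2) < \tau$, one case's ``interior'' optimum falls outside its case region, and I need to argue (using that the derivative of the \emph{other} case at $\tau$ has the appropriate sign, which follows from MLR and the location of $\tau$ relative to $\ratio^{-1}(1/2)$ and $\ratio^{-1}(2)$) that the other listed candidate still attains the maximum. A secondary subtlety is aligning the convention $\ratio = f_1/f_0$ (conditional ratio) with the joint-density ratio $q_1/q_0$ arising in the derivative; this differs by a fixed class-prior factor, which I would either absorb into the definition or handle by assuming balanced classes.
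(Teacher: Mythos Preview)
Your proposal is correct and follows essentially the same approach as the paper: split the domain at $\tau$, analyze each side by tracking the sign of $f_1 - \tfrac{1}{2}f_0$ (equivalently, your derivative $q_0 - 2q_1$) on the left and $f_0 - \tfrac{1}{2}f_1$ on the right, and use strict MLR to pin the unique interior maximizer at the points where $\rho$ hits $1/2$ and $2$. The paper carries out the same three-case analysis on $\rho(\tau)$ that you defer to ``corner-case bookkeeping,'' and it silently assumes balanced classes where you explicitly flag the prior-ratio discrepancy between $q_1/q_0$ and $f_1/f_0$.
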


Thm.~\ref{thm:mlr_br} states that of all thresholds in the range where MLR holds, the set of candidates for a best-response classifier reduces to just two. 
For natural cases in which the extreme choices of $\tau < a$ and $\tau > b$ are not optimal,%
\footnote{For example, a mixture of two Gaussians is MLR everywhere
except possibly in the far tails, where thresholding is ineffective.}
the structure of the game simplifies even further.
\begin{corollary}
\label{corr:thresh-2x2_reduction}
For $n=2$ and $H=\{h_\tau\}$,
any accuracy game played on an MLR region of $p$ reduces
to a $2 \times 2$ game. Hence, all results from Sec.~\ref{sec:2x2} hold.
\end{corollary}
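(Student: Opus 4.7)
The plan is to leverage Theorem \ref{thm:mlr_br} to prune the continuum threshold class $H=\{h_\tau\}$ down to an effective two-element strategy set, and then argue that from the players' perspective the resulting game is precisely a $2\times 2$ accuracy game in the sense of Sec.~\ref{sec:2x2}. The key observation is that the two best-response candidates identified by Thm.~\ref{thm:mlr_br}, namely $\tau_1^\star = \max\{a, \ratio^{-1}(1/2)\}$ and $\tau_2^\star = \min\{b, \ratio^{-1}(2)\}$, depend only on the MLR region $[a,b]$ and on $\ratio$ itself, and \emph{not} on the opponent's threshold. Hence the set of thresholds that can ever arise as a best response is fixed in advance, independent of play.

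First, I would restrict attention to the MLR region as stipulated, and invoke the ``natural'' hypothesis from the paragraph preceding the corollary: the best response lies within $[a,b]$ (rather than in the tails $\tau<a$ or $\tau>b$), so Thm.~\ref{thm:mlr_br} applies and yields $\br(\tau)\in\{\tau_1^\star,\tau_2^\star\}$ for every opponent threshold $\tau$. Next, I would observe that any threshold $\tau\notin\{\tau_1^\star,\tau_2^\star\}$ is (weakly) dominated in the sense that, against any opponent strategy, switching to one of $\tau_1^\star,\tau_2^\star$ weakly increases the market share. Consequently, neither best-response dynamics nor any pure Nash equilibrium can involve thresholds outside this pair: at a PNE each $h_i$ must itself be a best response to $h_{-i}$, hence $h_i\in\{h_{\tau_1^\star},h_{\tau_2^\star}\}$.

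Having reduced the effective strategy space of each of the two players to $\{h_{\tau_1^\star},h_{\tau_2^\star}\}$, I would then identify the induced game as a $2\times 2$ accuracy game by labelling $h_1 := h_{\tau_1^\star}$ and $h_2 := h_{\tau_2^\star}$, computing $a_1,a_2$ via \eqref{eq:acc} and $\delta_{12},\delta_{21}$ via \eqref{eq:discrepancy}, and noting that Proposition~\ref{prop:utils_of_players} immediately gives the payoff matrix of Table~\ref{tab:payoff_matrix_2x2}. At this point Thm.~\ref{thm:2x2_pne}, Lemma~\ref{lem:2x2_pne}, and Proposition~\ref{prop:acc_iff_ms} all apply verbatim, proving the corollary.

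The only subtle points I anticipate are (i) carefully formalizing the ``natural cases'' assumption so that the BR truly lies in $[a,b]$ and not at the boundary in a degenerate way, and (ii) handling the possibility that $\tau_1^\star = \tau_2^\star$ (e.g., when the MLR region is narrow or $\ratio$ does not cross both $1/2$ and $2$ inside $[a,b]$); in that degenerate case the game collapses to a trivial $1\times 1$ game with a single dominant strategy, which is consistent with (a vacuous instance of) Thm.~\ref{thm:2x2_pne}. Neither of these poses a real obstacle: the first is covered by the footnote's mixture-of-Gaussians-style examples, and the second fits naturally into the ``dominant-strategy'' branch of Thm.~\ref{thm:2x2_pne}.
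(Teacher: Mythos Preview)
Your proposal is correct and follows essentially the same approach as the paper: the paper's proof is a one-liner stating that the reduction is ``immediate from the fact that the strategy space of both players can be reduced to the 2 candidate best-responses stipulated in'' Thm.~\ref{thm:mlr_br}. Your write-up simply unpacks this with more care---noting explicitly that the two candidates do not depend on the opponent's threshold, handling the boundary and degenerate cases---but the underlying argument is identical.
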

Note the reduced model class is
$H = \{\ratio^{-1}(1/2), \ratio^{-1}(2)\}$.\footnote{If they exist; otherwise the optimal models are at $a,b$; see Thm.~\ref{thm:mlr_br}.}
This is not by chance:
under MLR, these ratios are precisely the points in which accuracy and discrepancy balance each other.
Interestingly, this partitions the population into three segments:
mostly negative points, mostly positive, and a mixed subpopulation.
Providers then compete over who obtains exclusive access to the more rewarding segments.




Thm.~\ref{thm:mlr_br} can be generalized to any 1D distribution:
\begin{theorem}
\label{thm:generalized_br_1d}
Fix $n=2$, and let $H = \{h_\tau\}$ be a class of threshold classifiers.
Then for any interval $[a,b]$ and any $\tau$:
\[
\br_{[a,b]}(\tau) \in \{ a,\,b, \, \tau \} \cup  P_+^{-1}(1/2) \cup P_+^{-1}(2)
\]
where $P_+^{-1}(z) = \{\tau: \rho(\tau) = z \land \rho'(\tau) >0 \}$.
\end{theorem}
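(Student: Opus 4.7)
The plan is to treat the deviating player's market share as a function $\mu(\tau')$ of the threshold it picks from $[a,b]$, and then classify its candidate maximizers. By Proposition~\ref{prop:utils_of_players}, $\mu(\tau') = \tfrac{1}{2}(a(\tau') + \delta(\tau'))$ where $a(\tau')$ is the accuracy of $h_{\tau'}$ and $\delta(\tau') = \delta_{\tau',\tau}$ is its partial discrepancy against the fixed $h_\tau$. Both terms are integrals of the joint density over intervals determined by $\tau'$, so $\mu$ is continuous on $[a,b]$ and smooth on $[a,b] \setminus \{\tau\}$, with a single kink at $\tau' = \tau$ where the one-sided derivatives disagree.

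I would then handle the two smooth pieces separately. For $\tau' < \tau$, a direct computation gives $\delta(\tau') = \int_{\tau'}^{\tau} p(x,1)\,dx$, and differentiating $\mu$ yields a first-order condition of the form $p(\tau',0) = 2\,p(\tau',1)$, which in the notation of Definition~\ref{def:mlrp} is $\rho(\tau') = 1/2$. Symmetrically, for $\tau' > \tau$ one obtains $\delta(\tau') = \int_{\tau}^{\tau'} p(x,0)\,dx$ and the condition $\rho(\tau') = 2$. A short second-order check shows that at any such critical point, $\mu''(\tau')$ has the sign of $-\rho'(\tau')$ up to a strictly positive factor, so only critical points with $\rho'(\tau') > 0$ can be local maxima---exactly the defining condition of $P_+^{-1}(\cdot)$. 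Since a continuous function on a compact interval attains its maximum at a boundary point, a non-smooth point, or an interior local maximum of a smooth piece, the candidate set collapses to $\{a, b, \tau\} \cup P_+^{-1}(1/2) \cup P_+^{-1}(2)$.

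The main obstacle I expect is bookkeeping rather than any single sharp step: tracking both sides of $\tau$ (including when $[a,b]$ lies entirely on one side), admitting $\tau$ itself as a formal candidate even though the upward jump of $\mu'$ across the kink means it is rarely a strict local maximum, and verifying that the second-order test cleanly reduces to a sign condition on $\rho'$. As a sanity check, Theorem~\ref{thm:mlr_br} should drop out as a corollary: under strict MLR on $[a,b]$, $\rho$ is monotone increasing, so each of $\rho^{-1}(1/2)$ and $\rho^{-1}(2)$ is at most a singleton and both automatically satisfy $\rho' > 0$, recovering the earlier two-point characterization (up to clamping at the endpoints).
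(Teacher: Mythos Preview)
Your proposal is correct and follows essentially the same route as the paper: both split $[a,b]$ at the opponent's threshold $\tau$, identify the interior critical points on each side as $\rho=1/2$ (left) and $\rho=2$ (right), and then rule out critical points with $\rho'\le 0$. The only cosmetic difference is that you invoke first- and second-order conditions directly, whereas the paper packages the same derivative-sign information as integral-monotonicity ``helpful claims'' (if $\rho>1/2$ throughout a subinterval then moving left strictly improves, etc.); the two arguments are isomorphic.
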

For reasonable distributions, 
it is likely that $| P_+^{-1}(z)| < c$ for some small constant $c$.
This then implies that $H$ effectively reduces to include only
$O(c)$ candidate strategies.

The above results
also have implications on dynamics:
\begin{proposition}
\label{prop:gen-br-1rd}
For $n=2$ and $H = \{h_\tau\}$,
best-response dynamics converge after one round.
\end{proposition}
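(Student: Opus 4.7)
The plan is to invoke Theorem~\ref{thm:generalized_br_1d}, which pins the best-response candidate set to a small and almost opponent-independent collection $C = \{a,b\} \cup P_+^{-1}(1/2) \cup P_+^{-1}(2)$ (the ``copy $\tau$'' option can only strictly beat all elements of $C$ in tied cases, and in such cases $\tau$ itself already satisfies the first-order condition that places it in $C$). Therefore, after player~1's first BR we have $h_1^1 \in C$, and after player~2 responds in the same round $h_2^1 \in C$ as well, so the game effectively collapses to a finite game on $C \times C$.

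Next, I would argue that on this finite game, one round of BR already lands at a PNE, by mirroring the case analysis of Theorem~\ref{thm:2x2_pne}. Elements of $P_+^{-1}(1/2)$ correspond to the optimal ``undercut from below'' thresholds (capturing an exclusive region to the left of the opponent), while elements of $P_+^{-1}(2)$ correspond to the optimal ``overtake from above'' thresholds (capturing an exclusive region to the right). Each player's BR picks the better of these two options relative to the opponent's current threshold. When player~2 best-responds to $h_1^1$, its chosen $h_2^1$ sits on the complementary side from the one $h_1^1$ was ``capturing'' (the anti-coordination case), or else both players coincide on the same dominant element of $C$ (the dominant-strategy case). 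Either way, the side partition induced by $h_2^1$ preserves the optimality of $h_1^1$ that was induced by $h_2^0$, so $h_1^1 \in \br(h_2^1)$, and symmetrically $h_2^1 \in \br(h_1^1)$.

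The main obstacle I expect is to formalize this ``side-selection'' argument when $|P_+^{-1}(1/2)|$ or $|P_+^{-1}(2)|$ exceeds one: I need to show that the best candidate within each side of the opponent is invariant to exactly where the opponent lies inside that side, so that player~2's move in round~1 cannot shift player~1's within-side selection. This should follow from the observation that the points in $P_+^{-1}(z)$ are defined by first-order optimality conditions on the utility that do not reference the opponent's $\tau$; only the partition of $C$ into ``left of $\tau$'' and ``right of $\tau$'' does, and that partition is constant so long as $\tau$ stays on the same side of each element of $C$.
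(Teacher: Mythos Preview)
Your approach differs substantively from the paper's. The paper does not reduce to a finite game on $C$; instead it argues directly that $(h_1^1, h_2^1)$ is a PNE by introducing the accuracy-optimal threshold $h_{opt}$ as a pivot. The key steps are: (i) after one round, some $h_{opt}$ must lie in the interval between $h_1^1$ and $h_2^1$; (ii) the second mover must land on the opposite side of $h_{opt}$ from the first mover, since otherwise $h_{opt}$ itself would strictly beat it against the first mover (higher accuracy \emph{and} larger discrepancy, because the disagreement interval nests); and (iii) the first mover's choice therefore remains a best response, because for candidates on its side of $h_{opt}$, accuracies are fixed and discrepancies all shift by the same constant when the opponent moves further away on the other side.

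Your within-side invariance captures the intuition of step (iii), but the real gap is step (ii). You assert that $h_2^1$ ``sits on the complementary side from the one $h_1^1$ was capturing,'' yet give no argument for this, and it does not follow from the $2\times2$ analogy: once $|C|>2$, nothing in your framework prevents player~2 from picking a different element of $C$ on the \emph{same} side as $h_1^1$, or from landing between two elements of $C$ in a way that alters the left/right partition on which $h_1^1$'s optimality depended. The $h_{opt}$ anchor is precisely what rules this out --- any same-side response to $h_1^1$ is dominated by $h_{opt}$ --- and without such an anchor your ``partition is constant so long as $\tau$ stays on the same side of each element of $C$'' is an assumption rather than a conclusion. (Your handling of the ``copy $\tau$'' option is also loose: $\tau$ need not satisfy any first-order condition placing it in $C$; but since a copy-BR immediately yields a PNE, this case can be disposed of separately and is not the main issue.)
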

We therefore receive that for threshold classifiers, not only is calculating the best-response a simple task, but the market also converges immediately.
The difference from the MLR setting is that
there can now be multiple equilibria,
and convergence can depend on the initial choices of $\{h_i^0\}$.

Best-response dynamics imply that model updates can  improve market share only for the provider that responds.
Interestingly, in the above setting, we can show that a best-response by one player improves outcomes also for the other.
\squeeze
\begin{proposition} 
\label{prop:i-improve-you-improve}
Let $h_i^0 = h_{opt}, \, \forall i$.
Then for each $s_i$, market share $\ms_i$ increases even when the other $s_j$ best-responds.
\end{proposition}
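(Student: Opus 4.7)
The plan is to apply Proposition~\ref{prop:utils_of_players} twice---once to the initial configuration and once after the best response---and then exploit the fact that the discrepancy term is a probability and hence non-negative. The key conceptual observation is that, while $s_i$'s classifier is held fixed at $h_{opt}$, the only channel through which $s_j$'s choice enters $s_i$'s utility is the discrepancy $\delta_i(h_j)$, and this channel can only help $s_i$.

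First I would set up the baseline. With $h_1 = h_2 = h_{opt}$, the two classifiers agree on every input, so $\delta_{12} = \delta_{21} = 0$ and $a_1 = a_2 = a_{opt}$, where $a_{opt}$ denotes the accuracy of $h_{opt}$. Proposition~\ref{prop:utils_of_players} then gives $\ms_i^{\mathrm{old}} = \tfrac{1}{2} a_{opt}$ for both providers.

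Next I would let $h_j' = \br(h_{opt})$ denote $s_j$'s best response. Since $s_i$ still uses $h_{opt}$, its accuracy remains $a_{opt}$, and Proposition~\ref{prop:utils_of_players} yields
\[
\ms_i^{\mathrm{new}} = \tfrac{1}{2}\bigl(a_{opt} + \delta_i(h_j')\bigr),
\]
where $\delta_i(h_j') = \expect{p}{\one{h_{opt}(x)=y \wedge h_j'(x) \neq y}} \geq 0$ as it is a probability. Subtracting gives $\ms_i^{\mathrm{new}} - \ms_i^{\mathrm{old}} = \tfrac{1}{2} \delta_i(h_j') \geq 0$, which is the claim for $s_i$. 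For $s_j$ itself the claim is immediate from the definition of best response: $\ms_j^{\mathrm{new}} \geq \ms(h_{opt} \mid h_{opt}) = \tfrac{1}{2} a_{opt} = \ms_j^{\mathrm{old}}$.

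There is no real obstacle here---the calculation is routine once Proposition~\ref{prop:utils_of_players} is in hand. What makes the statement worth recording is the following interpretation: because $h_{opt}$ maximizes accuracy, any best response $h_j'$ that strictly improves $s_j$'s market share must do so by capturing users on which $h_{opt}$ errs, and in the process it necessarily abandons some users on which $h_{opt}$ is correct; those abandoned users then become exclusive territory for $s_i$. The one subtlety I would flag in the write-up is that the inequality is weak in general: strict improvement for $s_i$ requires $h_j'$ to disagree with $h_{opt}$ on at least one point that $h_{opt}$ classifies correctly, so a best response that happens to coincide with $h_{opt}$ leaves $\ms_i$ unchanged.
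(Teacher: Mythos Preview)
Your argument is internally correct but addresses a narrower claim than the one the paper proves. You keep $s_i$ fixed at $h_{opt}$ and let $s_j$ best-respond to $h_{opt}$; then the non-negativity of $\delta_i(h_j')$ immediately gives $\ms_i^{\mathrm{new}} \ge \ms_i^{\mathrm{old}}$. That is only the \emph{first} step of the dynamics, and as you yourself note it is essentially trivial.

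The proposition sits in the threshold-classifier section and is meant to cover the full best-response dynamics initialized at $h_{opt}$: first $s_i$ moves to $h_i^1 = \br(h_{opt})$, and then $s_j$ moves to $h_j^1 = \br(h_i^1)$. The substantive claim is that when $s_j$ makes this \emph{second} move, $s_i$'s market share also rises. Here $s_i$ is no longer at $h_{opt}$, so your baseline $\delta_i(h_{opt}) = 0$ is unavailable; one must instead show $\delta_{h_i^1}(h_j^1) \ge \delta_{h_i^1}(h_{opt})$, which is not automatic. The paper's proof uses the one-dimensional threshold structure (via Proposition~\ref{prop:gen-br-1rd}) to argue that $h_j^1$ lands on the opposite side of $h_{opt}$ from $h_i^1$, so that the set of points on which $h_i^1$ is correct and the opponent is wrong strictly contains what it was when the opponent sat at $h_{opt}$. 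Without that geometric fact, $s_j$'s best response could in principle move \emph{toward} $h_i^1$ and shrink $\delta_{ij}$.

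So the gap is a misreading of the scope: you have handled the step where the non-mover is at $h_{opt}$ (trivial, holds for any model class), but not the step where the non-mover has already deviated (non-trivial, relies on threshold structure). To complete the proof you need to argue that $\br(h_i^1)$ lies on the far side of $h_{opt}$ from $h_i^1$, and then invoke the containment $\delta_{h_i^1,\,opt} \subset \delta_{h_i^1,\,h_j^1}$.
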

Empirically, we observe that this form of implicit cooperation emerges
also in broader settings.
For $n>2$, outcomes improve once all other players have responded.



\subsection{Accuracy markets for general model classes}

For general classes and $n=2$,
several properties of the market can still be established.
The first considers providers:
\begin{proposition}
\label{prop:ms_increases}
If $h_i^0=h^0 \,\, \forall i$,  for some $h^0$,
then $\ms_i^* \ge \ms_i^0$.
\end{proposition}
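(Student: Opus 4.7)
The plan is to exploit a simple but useful observation: playing $h^0$ serves as a \emph{safe default} that guarantees a market share of at least $\ms_i^0$ regardless of what the other providers do. Once this is in hand, equilibrium follows from the definition of best response. I interpret $\ms_i^*$ as the market share of provider $s_i$ at the equilibrium reached by the best-response dynamics (so $h_i^*$ is a best response to $h_{-i}^*$).

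First, I would compute the initial market share. Since all providers play the same $h^0$, on any point $x$ with $h^0(x)=y$ every provider is accurate, so $\kappa(x)=n$ and under uniform tie-breaking each provider receives mass $1/n$ from this point. Aggregating, $\ms_i^0 = a^0/n$, where $a^0 = \expect{p}{\one{h^0(x)=y}}$ is the accuracy of $h^0$. In the $n=2$ regime that the section is focused on, this specializes to $a^0/2$.

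The core step is the lemma that for any profile $h_{-i}$ of competitors, $\ms(h^0 \mid h_{-i}) \ge a^0/n$. For each point $x$ with $h^0(x)=y$, provider $s_i$ is among the set of accurate providers, so under uniform tie-breaking it is chosen with probability $1/\kappa(x)$. Since there are at most $n$ providers total, $\kappa(x) \le n$, hence $1/\kappa(x) \ge 1/n$. Integrating over the set $\{x : h^0(x)=y\}$ yields $\ms(h^0 \mid h_{-i}) \ge a^0/n$, regardless of $h_{-i}$. Putting everything together, at equilibrium $h_i^*$ is a best response to $h_{-i}^*$, so
\[
\ms_i^* \;=\; \ms(h_i^* \mid h_{-i}^*) \;\ge\; \ms(h^0 \mid h_{-i}^*) \;\ge\; \frac{a^0}{n} \;=\; \ms_i^0,
\]
which is exactly the claim.

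There is no real obstacle here beyond spotting the right benchmark strategy: the argument never touches any structural property of $H_i$ or of $p$, so it holds for arbitrary model classes and for any $n$, not only $n=2$. One subtle point worth noting is that the tie-breaking assumption matters---$\kappa(x) \le n$ is what converts the shared-correct-prediction observation into a pointwise $1/n$ lower bound---but this is precisely the convention already adopted in Section~\ref{sec:setup}. The proposition should therefore be read as saying that starting from full coordination on $h^0$ is a \emph{Pareto-safe} initialization: best-response dynamics cannot leave any provider worse off than the uniform split of $h^0$'s accuracy.
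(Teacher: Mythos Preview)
Your proof is correct and follows essentially the same idea as the paper's: since $h_i^*$ is a best response to $h_{-i}^*$, it must do at least as well as the always-available strategy $h^0$, which in turn guarantees at least $a^0/n$ against any opponent profile. The paper's version is specialized to $n=2$ and phrased via the decomposition $\ms = \tfrac{1}{2}(a+\delta)$ together with $\delta_{h^0,j}\ge 0$, whereas your pointwise $\kappa(x)\le n$ argument is a bit more direct and works for arbitrary $n$; but the underlying benchmark-strategy idea is identical.
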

That is, if providers start at the same initial classifier,
then all of them will provably gain from competition.
This directly implies that competition also improves welfare for users:
\begin{corollary}
\label{corr:welfare}
Fix initial classifier $\h^0 \,  \forall i$, and let $\h^*$
be the set of classifiers at equilibrium. Then
$\welf(\h^*) \ge \welf(\h^0)$.
\end{corollary}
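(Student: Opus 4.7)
The plan is to reduce the statement to Proposition~\ref{prop:ms_increases} by way of the identity $\welf(\h) = \sum_{i=1}^n \ms_i(\h)$. Once this identity is in hand, the corollary follows by summing the inequalities $\ms_i^* \ge \ms_i^0$ that Proposition~\ref{prop:ms_increases} provides for each provider individually.

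First I would verify the welfare decomposition. For any fixed $x$, let $\kappa(x)$ denote the number of providers for which $h_i(x)=y$. If $\kappa(x)\ge 1$, then exactly one provider is chosen by the uniform tie-breaking rule, so $\sum_i \prob{}{s(x)=s_i} = \kappa(x)\cdot\tfrac{1}{\kappa(x)}=1=\one{s(x)\ge 1}$; if $\kappa(x)=0$, both sides are $0$. Taking expectation over $p$ and exchanging the sum with the expectation yields
\begin{equation*}
\sum_{i=1}^n \ms_i(\h) \;=\; \expect{p}{\sum_{i=1}^n \prob{}{s(x)=s_i}} \;=\; \expect{p}{\one{s(x)\ge 1}} \;=\; \welf(\h).
\end{equation*}

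Next I would apply Proposition~\ref{prop:ms_increases} under the hypothesis that $h_i^0=h^0$ for every $i$. This gives $\ms_i(\h^*)\ge \ms_i(\h^0)$ for each $i\in[n]$. Summing over $i$ and invoking the identity above on both $\h^0$ and $\h^*$ yields $\welf(\h^*)=\sum_i \ms_i(\h^*) \ge \sum_i \ms_i(\h^0)=\welf(\h^0)$, which is the desired inequality.

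There is no real obstacle here, as the argument is essentially a one-line aggregation once the per-provider monotonicity from Proposition~\ref{prop:ms_increases} is available. The only subtlety worth being explicit about is that the identity $\welf=\sum_i \ms_i$ depends on the uniform tie-breaking rule specified in Section~\ref{sec:setup}; under a different tie-breaking rule the identity still holds as long as the probabilities $\prob{}{s(x)=s_i}$ sum to $\one{\kappa(x)\ge 1}$, so the argument would carry over unchanged.
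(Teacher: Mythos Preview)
Your proposal is correct and follows exactly the same approach as the paper: the paper's proof is the one-line remark that the corollary is immediate from Proposition~\ref{prop:ms_increases} because $\welf = \sum_i \ms_i$. You additionally spell out the verification of this identity under uniform tie-breaking, which the paper leaves implicit.
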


In terms of the market, we can bound its concentration:
\begin{proposition}
\label{prop:bounded_market_concentration}
Let $\h^*=(h_i,h_j)$ be any equilibrium.
Then $ \ms(h_i|h_j)\le 2\cdot\ms(h_j|h_i)$.
\end{proposition}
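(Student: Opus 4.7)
The plan is to exploit the key equilibrium condition that neither player can profit by \emph{copying the opponent's classifier}. For $n=2$ with classifiers $(h_i,h_j)$, if player $j$ deviates to $h_i$, both providers play the same $h_i$, so every user correctly classified by $h_i$ is a tie split uniformly; hence $j$'s post-deviation utility is exactly $\tfrac{1}{2}a_i$. The PNE condition on $j$ then reads $\ms(h_j\mid h_i)\ge \tfrac{1}{2}a_i$, and by Prop.~\ref{prop:utils_of_players} this simplifies to $a_j+\delta_{ji}\ge a_i$.

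Next, I would introduce the natural decomposition $c := \expect{p}{\one{h_i(x)=y\,\wedge\,h_j(x)=y}}$, so that $a_i = c+\delta_{ij}$ and $a_j = c+\delta_{ji}$. Substituting into the inequality above gives the clean relation $2\delta_{ji}\ge \delta_{ij}$, i.e., at equilibrium the ``losing'' provider's exclusive correctness cannot be more than twice the ``winning'' provider's exclusive correctness. This is the heart of the argument: anti-coordination at a PNE cannot be too lopsided in the exclusive-correctness dimension, because otherwise the smaller player would rather imitate the larger one.

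To finish, I would apply Prop.~\ref{prop:utils_of_players} in the other direction:
\[
\ms(h_i\mid h_j)=\tfrac{1}{2}(a_i+\delta_{ij})=\tfrac{1}{2}(c+2\delta_{ij}),\qquad \ms(h_j\mid h_i)=\tfrac{1}{2}(c+2\delta_{ji}).
\]
Then $2\ms(h_j\mid h_i)-\ms(h_i\mid h_j) = \tfrac{1}{2}c + 2\delta_{ji}-\delta_{ij}\ge \tfrac{1}{2}c + \tfrac{3}{2}\delta_{ij}-\delta_{ij}\ge 0$ using $2\delta_{ji}\ge\delta_{ij}$ and $c\ge 0$, yielding the claimed bound $\ms(h_i\mid h_j)\le 2\,\ms(h_j\mid h_i)$.

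The proof is short, so there is no serious obstacle; the only subtlety is recognizing that the right deviation to invoke is the ``copy the opponent'' strategy (which is available since both providers share the model class in this two-player setting) and that one only needs the equilibrium inequality for the smaller-share player to close the argument. I would also note that the bound is tight in spirit: it degenerates toward equality precisely when the common-correct mass $c$ is small and $\delta_{ij}=2\delta_{ji}$, consistent with the chicken-like anti-coordination structure identified in Thm.~\ref{thm:2x2_pne}.
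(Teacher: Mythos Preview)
Your proof is correct and uses essentially the same idea as the paper's: invoke the PNE condition that the smaller-share player does not profit by copying the opponent's classifier (giving $\ms(h_j\mid h_i)\ge \tfrac{1}{2}a_i$), and then deduce the ratio bound via Prop.~\ref{prop:utils_of_players}. The only blemish is a harmless arithmetic slip in your final chain: substituting $2\delta_{ji}\ge\delta_{ij}$ yields $\tfrac{1}{2}c+\delta_{ij}-\delta_{ij}=\tfrac{1}{2}c\ge 0$ (not $\tfrac{3}{2}\delta_{ij}$), but the conclusion is unaffected.
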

Thus,
despite the tendency for differentiation under competition,
no player can dominate more than $2/3$ of the market.


\paragraph{General accuracy markets.}
Empirically, many of our above results hold also for any number of players
and general model classes:
quick convergence, implicit cooperation,
an incentive to differentiate, sacrificing accuracy for market share,
bounded market concentration, and high welfare.
Some results however do not carry over to $n>2$;
for example, the order of play becomes much more intricate,
and whether moving first is good or bad can depend on context.
Importantly, our results hold despite the intractability of computing best responses exactly,
and by using our method for learning approximate best responses---%
presented next.

\section{Method}  \label{sec:method}


We now turn to the question of how to implement a best response,
i.e., by solving Eq.~\eqref{eq:br} for any setting.
Our main observation is that a provider's market share objective
(Eq.~\eqref{eq:sp_utility_exp})
can be rewritten as a weighted expected accuracy objective with a particular choice of per-example weights:
\begin{equation}
\label{eq:weighted_acc_obj}
\ms_i = \expect{p}{w_i(x) \cdot \one{h_i(x)=y}}, \quad
w_i(x) = \frac{1}{1+\kappa_{-i}(x)}
\end{equation}
where $\kappa_{-i}(x)=|\{s_j \neq s_i : h_j(x)=y \}|$,
i.e., the number of other providers that are correct on $x$.
Thus, weights $w_i(x)$ determine the importance of input $x$ for provider $i$,
and inform the objective of which inputs to target, or avoid.
\squeeze



\begin{figure*}[t!]

\includegraphics[width=\textwidth]{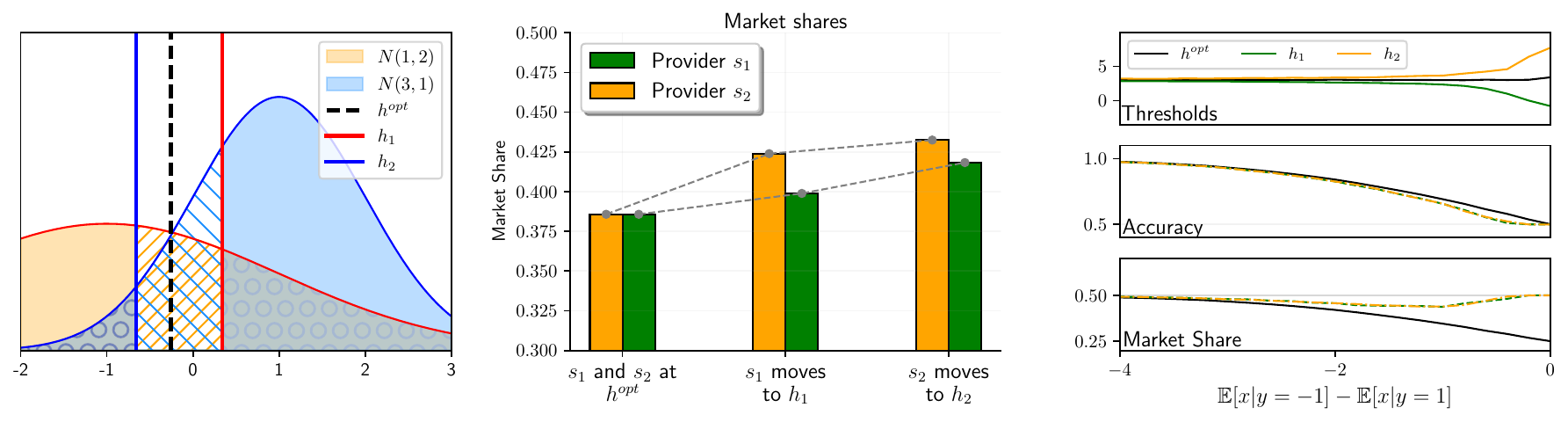}
\caption{%
\textbf{Two-player threshold market.}
\textbf{(Left:)} 
Data consists of two class-conditional Gaussians $p(x\,|\,y) = \N(ay,\sigma_y)$
Beginning at $\hopt$, providers compete over who gets the better classifier,
$h_2$, which secures exclusive access to the larger sector of positive users (blue).
\textbf{(Center:)}
The game as played over time. Each best response improves market share for both providers, but the second mover ($s_2$) prevails.
\textbf{(Right:)}
Outcomes for increasingly distanced $p(x\,|\,y)$ (here $\sigma_y=\sigma$).
Equilibrium classifiers are pulled further away, and sacrifice accuracy 
for increased market share.
}
\label{fig:synth}
\end{figure*}

\paragraph{Hardness.}
In terms of 
tractability,
our results are mixed:
\begin{observation}
For any choice of $H$,
and given $w_i$, computing the best-response classifier
$h_i = \br(h_{-i})$ is just as hard as maximizing expected accuracy over $H$.
\end{observation}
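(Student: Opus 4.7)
The plan is to establish the ``just as hard'' claim by giving reductions in both directions, using the weighted formulation of Eq.~\eqref{eq:weighted_acc_obj} as the bridge. Since the observation is bidirectional, I need to show (i) that an accuracy-maximization problem can be encoded as an instance of best-response, and (ii) that any best-response problem can be solved using an accuracy-maximization oracle for the same $H$.

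For direction (i)---best-response is at least as hard as accuracy maximization---I would exhibit a degenerate instance. Take $n=1$, so there are no other providers at all, hence $\kappa_{-i}(x)=0$ for every $x$ and $w_i(x)=1$ uniformly. Under this choice, Eq.~\eqref{eq:weighted_acc_obj} collapses to the standard expected accuracy objective of Eq.~\eqref{eq:exp_accuracy}, so any algorithm solving $\br$ also solves accuracy maximization over $H$. Equivalently one could keep $n>1$ and take the other classifiers to always predict incorrectly, which again forces $w_i(x)=1$ everywhere.

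For direction (ii), I would reduce best-response to accuracy maximization via a reweighting argument. Given $h_{-i}$, the weights $w_i(x)=1/(1+\kappa_{-i}(x))$ depend only on $h_{-i}$ and $y$, so they are known upfront. Define a reweighted distribution $\tilde{p}(x,y)\propto w_i(x)\,p(x,y)$ with normalizer $Z=\expect{p}{w_i(x)}$. Then
\[
\expect{\tilde{p}}{\one{h(x)=y}} \;=\; \frac{1}{Z}\,\expect{p}{w_i(x)\,\one{h(x)=y}} \;=\; \frac{\ms_i}{Z}.
\]
Since $Z$ is independent of $h$, the $\argmax$ over $H$ of unweighted accuracy under $\tilde{p}$ coincides with the $\argmax$ of the best-response objective under $p$. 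Thus a single invocation of an accuracy-maximization oracle for $H$ on $\tilde{p}$ returns a best-response classifier.

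The main obstacle---more a matter of care than a real barrier---is making the reduction algorithmically meaningful, since in practice the ``accuracy oracle'' is a learning procedure that consumes a finite sample. I would handle this by realizing the reweighting on the training set $\smplst$ directly: either by rescaling the empirical per-example losses (weighted ERM, which any standard classifier trainer supports), or, if one insists on an unweighted-sample oracle, by duplicating each $(x_j,y_j)$ in proportion to $w_i(x_j)$ after clearing denominators. Since $w_i(x)\in\{1/(1+k):k=0,\dots,n-1\}$, the multiplicities needed depend only on $n$ and not on the data, so the overhead of the reduction is controlled and the two problems are equivalent up to this transformation.
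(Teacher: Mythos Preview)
Your proposal is correct and formalizes exactly the idea the paper states only in passing: the paper's entire justification is the single remark that ``weights in Eq.~\eqref{eq:weighted_acc_obj} simply modify the data distribution, a change to which learning algorithms should be agnostic,'' whereas you spell out both directions of the reduction explicitly (trivial weights for one direction, the reweighted distribution $\tilde p\propto w_i\cdot p$ for the other). So the approach is the same; you have simply supplied the details the paper left implicit.
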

This holds since weights in Eq.~\eqref{eq:weighted_acc_obj} simply modify the data distribution,
a change to which learning algorithms should be agnostic.%
\footnote{E.g., consider how the PAC framework defines a class $H$ as learnable if there exists an (efficient) algorithm that maximizes accuracy well simultaneously for \emph{all} distributions
\citep{valiant1984theory}.}
Unfortunately, because maximizing the expected 0-1 accuracy is computationally intractable (and statistically challenging) for the vast majority of classification problems,
computing a best-response classifier exactly will mostly be infeasible.
The bright side is that this is precisely the problem that machine learning practice aims to solve.
Hence, any solution that works well for general machine learning tasks
should also work well to learn best responses.
\squeeze

\paragraph{Learning (approximate) best-response classifiers.}
Since the game is played on the empirical distribution,
we can adapt any method of empirical risk minimization that supports custom example weights, i.e., that aims to solve:
\begin{equation}
\label{eq:erm}
\hhat_i = \argmin_{h \in H}
\frac{1}{m} \sum_{j=1}^m  w_i(x_j) \loss(y_j, h(x_j))
+ \lambda R(h)
\end{equation}
where $\ell$ is a proxy loss (e.g., hinge loss or cross-entropy)
and $\reg$ is an (optional) regularization term.
This approach applies to any type of data and choice of model class $H$.
We refer to $\hhat_i$ as the approximate best-response classifier of $s_i$.
Note that optimizing $\hhat_i$ 
depends on the other classifiers $h_{-i}$ only through the example weights $w_i(x_j)$.
This means that computing the empirical best-response classifier requires
only access to the number of other providers that are correct on each data point---%
not to the actual classifiers in $h_{-i}$.
Also, we can observe that the average weight $\frac{1}{|S|}\sum_{i \in S} w_i$ has an intuitive meaning: it is the maximum possible market share that could be achieved by a perfect classifier, i.e., when all its predictions are correct.
The actual $\ms_i$ is then this optimal market share minus the weighted
loss incurred by the chosen classifier $\hhat_i$.
\paragraph{Performativity.}
Eq.~\eqref{eq:exp_accuracy} casts maximizing market share
as a problem of
learning under distribution shift,
where shift is due to competition, as expressed by weights $w_i(x)$.
From the perspective of a single provider $s_i$ at time $t$,
weights $w_i^t(x)$ describe how the market has changed
\emph{in response to its own actions},
i.e., the choice of $h_i^{t-1}$.
This reveals that learning in a market setting is of a performative nature:
the choice of classifier at the current time step $t$ shapes the (effective) distribution at the next, $p_i^{t+1}(x,y)$.
When there are only two providers, performativity is \emph{stateless},
meaning that $p_i^{t+1}$ depends only on the current $h_i^{t}$
through how $s_j$ will best-respond; this relates to the common (and simpler) setting often studied in performative prediction \citep{perdomo2020performative}.
When $n>2$, dynamics become \emph{stateful}, i.e.,
are path-dependent, and so choices accumulate over time---%
a generally much more challenging setting \citep{brown2022performative,li2022state}.
Luckily, our market construction adds structure that makes it a tractable instance.
An interesting point to make is that performativity in our setting
has no `real' effect on the distribution.
Rather, it only changes how providers should \emph{perceive} the distribution
in order to effectively maximize their utility in the market.








\begin{table*}[t!]
  \centering
\caption{
    \textbf{Learning in accuracy markets.}
    Results show outcomes of best-response dynamics,
    implemented as training by Eq.~\eqref{eq:erm}.
    All runs were initialized to $h^0=\hopt$, and converged after at most $t=2$ rounds.
    Results include \% increase at equilibrium of
    market share (min and max over providers),
    market concentration ($\mathrm{HHI}=\sum_i \ms_i^2$), and welfare. Standard errors of the experiments are insignificant and shown in Appx.~\ref{appx:tables}.
    }
\resizebox{\textwidth}{!}{
\begin{tabular}{rrrllllrllllrllll}
  &   &   & \multicolumn{4}{c}{\textbf{COMPAS-arrest}} &   & \multicolumn{4}{c}{\textbf{COMPAS-violent}} &   & \multicolumn{4}{c}{\textbf{Adult}} \\
\cmidrule{4-7}\cmidrule{9-12}\cmidrule{14-17}  &   &   & \multicolumn{1}{c}{$\min \ms$} & \multicolumn{1}{c}{$\max \ms$} & \multicolumn{1}{c}{HHI} & \multicolumn{1}{c}{welfare} &   & \multicolumn{1}{c}{$\min \ms$} & \multicolumn{1}{c}{$\max \ms$} & \multicolumn{1}{c}{HHI} & \multicolumn{1}{c}{welfare} &   & \multicolumn{1}{c}{$\min \ms$} & \multicolumn{1}{c}{$\max \ms$} & \multicolumn{1}{c}{HHI} & \multicolumn{1}{c}{welfare} \\
\cmidrule{4-7}\cmidrule{9-12}\cmidrule{14-17}\multirow{5}[2]{*}{\begin{sideways}\# providers\end{sideways}} & 2 &   & +25.4\% & +31.0\% & +64.5\% & +28.2\% &   & +41.8\% & +61.2\% & +130.6\% & +51.5\% &  &+4.0\% & +30.5\% & +39.8\% & +17.3\% \\
  & 3 &   & +38.7\% & +48.5\% & +106.2\% & +43.5\% &   & +46.8\% & +82.5\% & +163.7\% & +61.6\% &   & +2.9\% & +41.7\% & +51.8\% & +22.1\%  \\
  & 4 &    & +38.0\% & +66.9\% & +121.9\% & +48.4\% &   & +48.5\% & +94.6\% & +172.0\% & +63.9\%  & & +6.1\% & +39.4\% & +51.2\% & +22.0\% \\
  & 5  &   & +36.8\% & +83.2\% & +128.7\% & +50.1\% &   & +48.7\% & +94.6\% & +172.3\% & +64.0\%&   & +11.7\% & +33.2\% & +51.4\% & +22.7\% \\
  & 6  &   & +38.7\% & +80.4\% & +127.9\% & +50.2\% &   & +48.5\% & +89.0\% & +172.3\% & +64.2\% &   & +10.5\% & +41.6\% & +53.7\% & +23.2\%\\
\cmidrule{4-7}\cmidrule{9-12}\cmidrule{14-17}\end{tabular}%
}
\label{tbl:main}%
\end{table*}%


\section{Experiments}
\label{sec:exp}

We now present our empirical investigation of learning in accuracy markets.
We begin by demonstrating the basic mechanics of competitive learning on simple synthetic data which allows us to compute best-response classifiers exactly.
Then we switch to real data and apply our method from Sec.~\ref{sec:method} to accuracy markets across multiple datasets
and various learning algorithms.
Code is publicly available at 
\url{https://github.com/BML-Technion/market4acc}.

\subsection{Synthetic data}
\label{sec:exp:synth}
To gain an understanding of how accuracy markets work,
consider a simple setting with $n=2$ providers, binary labels $y \in \{\pm 1\}$,
univariate features $x \in \R$ sampled from class-conditional Gaussians
$x \sim p(x \,|\,y) = \N(ay,\sigma_y)$,
and threshold classifiers $H=\{h_\tau(x) = \one{x > \tau} \mid \tau \in \R\}$.
\cref{fig:synth} (left) illustrates this setup for
$a=1,\sigma_{-1}=2$,
and $\sigma_{+1}=1$,
and shows the learned classifiers at equilibrium,
$h_1$ and $h_2$;
as Thm.~\ref{thm:mlr_br} suggests, these are precisely $\rho^{-1}(1/2)$ and $\rho^{-1}(2)$.
Note how the region between $h_1$ and $h_2$ (hatches) is split between the providers:
$s_1$ is exclusively correct on positive examples,
and $s_2$ on negatives.
The regions to the right of $h_1$ and left of $h_2$ are shared. 

Fig.~\ref{fig:synth} (center) shows how market shares $\ms_1,\ms_2$ evolve over 
rounds of best-responses.
Here we initialize $h_1^0=h_2^0=\hopt$ where
$\hopt$ is the optimal classifier (i.e., which maximizes accuracy on $p$).
In line with our results from Sec.~\ref{sec:2xH},
dynamics converge after one round, i.e., each provider responds once,
and so $h_1=h_1^1$ and $h_2=h_2^1$.
Although $s_1$ moves first and improves $\ms_1$,
this not only improves $\ms_2$ for $s_2$, but also \emph{to a greater extent than that of $s_1$}.
When $s_2$ then moves, again both $\ms_1,\ms_2$ increase,
but $\ms_2$ retains its advantage over $\ms_1$.
Hence, $s_2$ `wins' the chicken game by playing second
and obtaining access to the larger exclusive subgroup of positives.
Regardless of who wins,
users gain from the competition since welfare ($=\ms_1+\ms_2$)
always increases.
\squeeze

Fig.~\ref{fig:synth} (right) shows how outcomes change for matching gaussians ($\sigma_{+1}, \sigma_{-1}=1$)
when the class-conditional distributions
$p(x\,|\,y=-1)$ and $p(x\,|\,y=1)$
are pulled closer together, achieved by decreasing $a$.
When the distributions are far away,
$h_1$ and $h_2$ are at $\hopt$ and so fully share the market.
But as overlap increases, several effects take place.
First, $h_1$ and $h_2$ grow further apart and become more distinct in who they target,
causing the exclusivity regions to grow in size.
Second, since classification becomes harder,
the maximal attainable accuracy decreases.
The accuracies of $h_1,h_2$ also decrease, but at a faster rate---%
a result of specialization.
Third, we see that welfare---as the sum of market shares---
begins at its maximal value of 1 (when the gaussians are perfectly separable),
then decreases when the exclusivity doesn't extend to the tails, and finally returns to 1 when the providers play opposite thresholds.
We explore additional aspects on non-matching gaussians in \cref{app:add_exps:synth}.
\squeeze



\subsection{Real data}
\label{sec:exp:real}
Our goal in this section is to explore accuracy markets under
our three perspectives: (learning) providers, users, and the market.
We experiment with
three datasets: COMPAS-Arrest, COMPAS-Violence, and Adult,
and consider several learning algorithms, including
linear SVMs, boosted trees (using XGBoost), and random forests.
These generally work well for standard accuracy tasks on the above datasets.
Appendix~\ref{appx:exp_details} includes full details on datasets, methods, and our experimental setup.
Appendix~\ref{appx:add_exps} includes additional experiments that extend and complement those presented here.
\squeeze

\paragraph{Learning.}
Table~\ref{tbl:main} shows performance under several measures of interest across multiple datasets and for varying number of providers.
Here we show results for Linear SVMs, but note that other learning algorithms exhibit overall similar trends (see Appendix~\ref{appx:tables}).
All results are averaged over 10 random train-test splits.
The table describes outcomes after $t=2$ rounds,
which we found sufficed to obtain near-convergence across all settings---%
regardless of training set size, model class complexity, and the use of a proxy objective to implement (approximate) best responses.


In terms of market share improvement, we see that competition is helpful for all providers; nonetheless, there can be a large gap between the minimal and maximal improvement.
In the COMPAS datasets, this gap begins at smaller values,
but grows as $n$ increases.
For Adult, whose baseline accuracy is higher,
the gap remains mostly stable, but is large to begin with.
As competition progresses, the market becomes more concentrated,
which also generally increases with $n$.
Overall welfare gains are quite high, reaching up to $+65\%$.

\paragraph{Market outcomes.}
\Cref{fig:upset} shows how the market is partitioned across providers at equilibrium.
Here we focus on COMPAS-Arrest with XGBoost and $n=3$ providers;
providers are numbered by their order of play (i.e., $s_1 \prec s_2 \prec s_3$),
where this order is preserved across rounds.
The plot shows for each provider $s_i$ its total market share
(bottom left) and accuracy on the entire population (top left)
due to its final learned $h_i$.
The plot also shows the decomposition of the market 
across all subsets of providers: what proportion is exclusive to $s_1$, what is joint to $s_2$ and $s_3$, what is shared by all, etc. (right).
Here we see that $s_1$, who moved first, attained the largest market share (36\%).
However, its accuracy is significantly lower than others, and below 50\%.
The subsets plot reveals the reason:
$s_1$ was able to gain exclusive access to 28.5\% of the market;
it shares an additional $19\%$ with all providers,
but only $1\%$ with each of them alone.
In contrast, $s_2$ and $s_3$ share almost all of their users,
either as a pair ($44\%$) or along with $s_1$.
This shows how $s_1$ has come to dominate the market by learning a classifier $h_1$ that 
sacrifices accuracy in order to effectively target an exclusive user sector.
The low overall accuracy of $h_1$ suggests that {\naive}ly optimizing for accuracy without considering the effects of competition can be highly suboptimal in terms of market outcomes.
\squeeze

\extended{
\todo{idea: give insight as to what determines user sectors}
}

\begin{figure}[t!]
\centering
\includegraphics[width=\columnwidth]{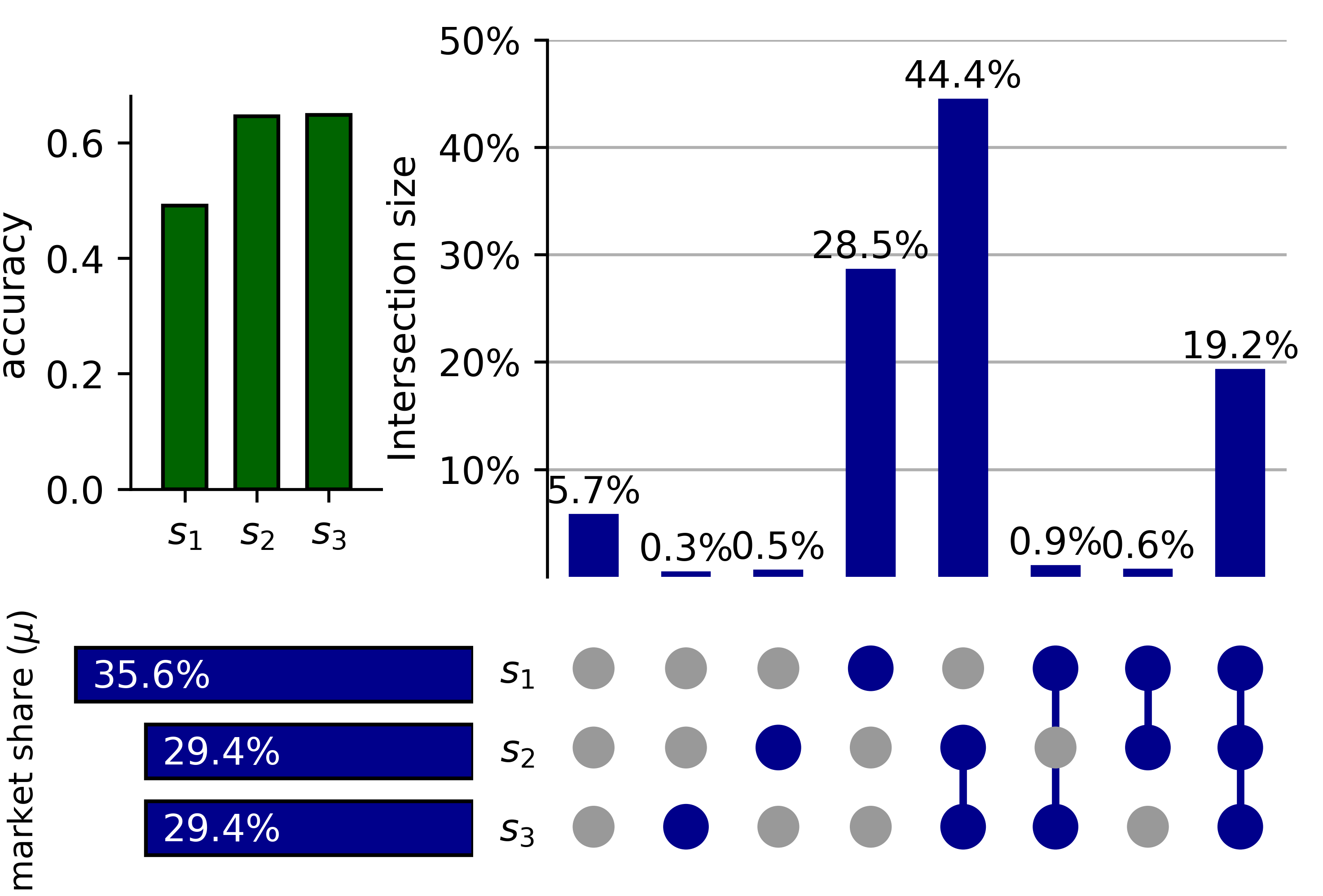}
\caption{%
\textbf{Market share.}
An example for $n=3$ where the first mover ($s_1$) dominates the market,
achieved by sacrificing overall accuracy for exclusive access to a large user sector.
Other providers are left to share the remaining sector.
}
\label{fig:upset}
\end{figure}
 \paragraph{Order of play.}
Whereas our $2\times2$ analysis from Sec.~\ref{sec:2x2}
suggested that the game either has a dominant strategy or a chicken-like structure
(which implies that moving second is preferable),
we see in Fig.~\ref{fig:upset} that for $n>2$ providers  reality is more complex, and in fact moving \emph{first} allows to dominate the market. Interestingly, this first move induces a 2-player game on the other providers
whose equilibrium admits a dominant strategy.
To quantify this phenomenon, and assert its robustness across methods,  we ran experiments for every combination of datasets and model classes, listed in \cref{appx:exp_details}.
For each experiment, namely for each combination of model class and dataset, the final market shares were calculated for each provider along with his/her relative position of play, i.e., at what position did the provider perform a best-response. Additionally, each experiment was performed for competitions with 2,3,4,5, and 6  players, so that we can compare dynamics across different market saturations.
\Cref{fig:order_of_play} shows the order of play comparisons, averaged out across all of the experiments that were described above. 
When the competition game is played with $n=2$ providers, we see a clear preference to be the provider that moves last, characterized by the market share term $\mu_2$.
The vast majority of experiments showed a significant gain in market share, as seen by the fact that the 25\% quantile of values already shows a net-positive gain from moving last. We also note that the expected (mean) competitive advantage of moving last is 3.6\% with a median of 4.8\%.
Given that for 2 players with equal model classes, the market share of a single provider will never exceed $\frac{2}{3} = 0.67$ (see \cref{prop:bounded_market_concentration}), then a $4.8\%$ difference in market share is  quite significant.

\begin{figure}[t!]
\centering
\includegraphics[width=\columnwidth]{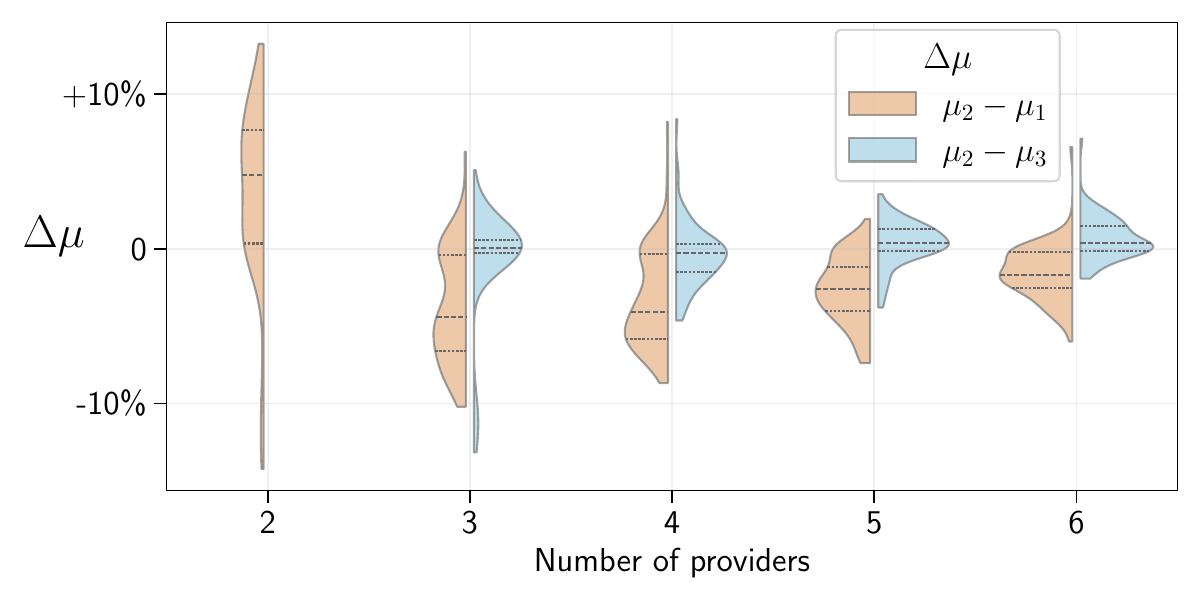}
\caption{%
\textbf{Influence of order of play on market share. }
The orange densities measure the difference in market share between the provider that moved 2nd and the provider that moved 1st, and the blue densities measure the difference in market share between the provider that moved 2nd and the provider that moved 3rd, for $n>2$. Dashed lines inside the densities represent the 25\%, 50\%, and 75\% quantiles of values, respectively, from bottom to top.
}
\label{fig:order_of_play}
\end{figure}

 When the competition game is played with $n\ge3$ providers, however,  we observe an entirely different dynamic.
\cref{fig:order_of_play} provides two densities: The orange density is as in the 2-provider setting ($\mu_2 - \mu_1$). The blue density is the difference in market share between the player who moved 2nd versus the player who moved 3rd ($\mu_2 - \mu_3$).
We find here that the ratios have switched: it is in fact more advantageous to be the provider that moves 1st, as evidenced by the negative orientation of the orange density plot. The next interesting thing that we note is the seeming insignificance of order-of-play beyond the first two positions, as we can observe that the blue density hovers around 0 with relatively low variance. This alludes to the premise put stated above that in competitive settings with 3 or more players, the person who moves 1st is in essence \quot{grabbing his territory}, which then induces all of the other players to play among themselves for the other resources, i.e, consumers.


\paragraph{User welfare.}
Our result in Cor.~\ref{corr:welfare} states that competition is conductive to welfare.
It remains to consider \emph{how} conductive it is,
as well as how \emph{quickly} welfare improves.
Fig.~\ref{fig:welfare} left shows how welfare changes over time
and for increasing number of providers $n$.
Here we focus on COMPAS-arrest and LinearSVC,
with other settings shown in Appendix~\ref{appx:welfare}.
The plot shows a clear trend of welfare increasing throughout competition.
It also makes apparent the effect of $n$:
as the number of providers increases, welfare climbs higher and faster.
For $n=2$, welfare attains a maximum of 0.85,
reached only at the second round.
For $n=3$, welfare maximizes at 0.95 by the end of the first round.
Notice that welfare reaches the upper bound of 1 already at $n=5$
and before the end of the first round (i.e., before all providers have moved).
With $n=6$ providers, this occurs even earlier.

The above depicts accuracy markets as highly efficient.
On the one hand, our market setting allows for the free flow of information,
and models users as making informed (rational) decisions---%
which are necessary to enable efficient outcomes.
But on the other, providers are restricted in that they cannot compute best-responses exactly.
We therefore take the results above to again suggest that maximizing market share using proxy objectives can work well in practice.

Since market share should generally align with accuracy,
another interesting question is how does the capacity to maximize accuracy affect the overall welfare.
For a different setting of competing predictors,
\citet{jagadeesan2024improved} argue that increased capacity can result in \emph{lower} welfare for users.
We show that this occurs quite distinctly in our setting as well:
When the complexity goes down, maximizing accuracy may be harder, but gaining discrepancy is easier, as there are more users to specialize on.
Following the idea of controlling capacity by the quality of representation,
we implement this by varying the number of features available for learning.
Fig.~\ref{fig:welfare} (right) shows welfare for increasing number of features
and for $n=2$ (see Appendix~\ref{appx:welfare} for more settings).
As expected, at time $t=0$, better representations entail higher accuracy, and therefore higher welfare.
But once providers respond, the trend inverts: 
restricting learning to use only two features attains the optimal welfare of 1, while using all features gives welfare of $0.84$. 

\extended{
\todo{say when complexity goes down, maximizing accuracy becomes harder, but in comparison improving discrepancy becomes easier. in the extreme of $H=\{0,1\}$ (or any two opposite classifiers), this is immediate. [can we quantify this?]}

\todo{what else can/should we say about mina's vs our results? what is the best way to relate the two papers/results?}
}

\begin{figure}[t!]
\centering
\includegraphics[width=\columnwidth]{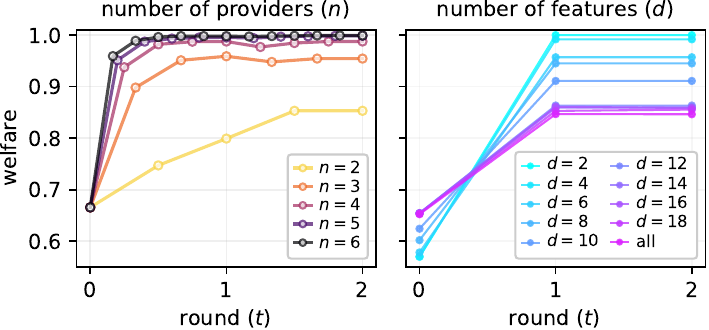}
\caption{%
\textbf{Welfare over rounds.}
Competition consistently increases welfare over time, until convergence.
Welfare improves with the number of providers \textbf{(left)},
but (perhaps counterintuitively) decreases with the quality of data \textbf{(right)}.
}
\label{fig:welfare}
\end{figure}
    
\section{Discussion}
This work studies learning in a competitive setting where 
classifiers are trained to increase market share.
From a learning perspective, our main message is that
while maximizing accuracy {\naive}ly is likely not a good strategy,
optimizing a weighted accuracy objective that correctly encodes competition
can be very effective.
Although technically similar, the transition to market-induced objectives
has implications on the market and consequently on user welfare.
In our market model competition promotes welfare,
but this relies on model transparency, efficient information flow,
and calculated user decisions.
Realistic markets are likely to fall short of such ideals:
firms may prefer to keep models private,
informational advantages can be exploited,
and user behavior can be far from rational.
The fact that most service sectors currently include only a few competing platforms
(consider media, social, e-commerce, finance, housing, etc.)
should raise concerns of oligopolistic behavior, notably collusion and lock-in practices. 
This requires deliberation of appropriate regulation
for these emerging accuracy markets.
\squeeze

\section*{Impact Statement}

Our work considers the role of machine learning in fostering markets
in which utility to consumers derives from personalized accuracy.
The market model we study is inspired by real markets of this type,
but as a model, makes several simplifying assumptions
that merit consideration before drawing conclusions about actual markets.
One assumption is that there is a single underlying distribution that is fixed and accessible to all providers. 
Although this is a common assumption in standard machine learning,
under competition it has further implications.
When data distributions differ across providers,
or even when the distribution is the same but samples are different (which is plausible),
it is no longer clear if pure equilibrium exists or is reachable through reasonable dynamics.
Another assumption is that users choose a provider who is accurate on their own input.
This applies in some cases, such as personalized recommendations:
users likely know their preferences, but do not know if (and which) content items match them---but can make conclusions once recommended.
More generally however, we consider our model as a simplification of outcomes that materialize and stabilize over time,
such as provider reputation, social learning, or confirmation in hindsight.
Another alternative is that users interact with a platform not once but many times;
if the platform has access to some personalized examples,
then competition can revolve around future expected outcomes.
A final assumption is that users are rational and choose by maximizing utility
independently at each time step.
This can be a reasonable assumption in settings where users have
both incentive and resources to invest effort in bettering their choices,
and when sufficient time passes between rounds to enable switching providers.
More generally, user behavior is likely to play a key role in market outcomes,
and can benefit from more realistic modeling.
It is also likely that competition can drive providers to exploit
users' behavioral weaknesses---an additional reason for establishing appropriate regulations and norms.

\section*{Acknowledgments}
The authors would like to thank Fan Yao, Moran Koren, Omer Ben-Porat, and Eden Saig for their insightful remarks and valuable suggestions. 
This work is supported by the Israel Science Foundation
grant no. 278/22.
\bibliography{refs}
\bibliographystyle{icml2025}

\newpage
\appendix
\onecolumn

\section{Proofs} \label{appx:proofs}
\paragraph{Notations.}
In the proofs below, we may use the multiple notations for \emph{partial discrepancy} interchangeably, namely $\delta_{ij} = \delta_{h_i}(h_j)$.
This is for readability and ease of portrayal.
Similarly, in place of $\mu$ (market share), sometimes there can be written $U$, for utility. These terms are also interchangeable. Lastly, the terms \quot{player} and \quot{provider} are interchangeable as well, since both refer to the learner.
\\

Before delving in to the proofs of the individual statements from the main paper, we will start with showing a helpful observation that sits at the crux of competitive dynamics, and will help with many of the proofs below.
\begin{observation}
\label{obs:rel_between_acc_and_deltas}
    For any 2 classifiers $h_i,h_j$ with accuracies $a_i, a_j$, it stands: $ a_i - \delta_{ij} = a_j - \delta_{ji}$.
\end{observation}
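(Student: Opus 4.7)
The plan is to show that both sides of the claimed identity equal the expected mass of inputs on which \emph{both} classifiers agree with $y$, and so the identity is really just a symmetric bookkeeping statement.

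First I would partition the event $\{h_i(x)=y\}$ into the two disjoint sub-events $\{h_i(x)=y \wedge h_j(x)=y\}$ and $\{h_i(x)=y \wedge h_j(x)\neq y\}$. Applying linearity of expectation to the indicators of these sub-events, together with the definitions of $a_i$ (Eq.~\eqref{eq:acc}) and $\delta_{ij}$ (Eq.~\eqref{eq:discrepancy}), yields
\[
a_i \;=\; \expect{p}{\one{h_i(x)=y \,\wedge\, h_j(x)=y}} \;+\; \delta_{ij},
\]
and hence $a_i - \delta_{ij} = \expect{p}{\one{h_i(x)=y \,\wedge\, h_j(x)=y}}$.

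Next, I would apply the exact same decomposition to $a_j$, partitioning $\{h_j(x)=y\}$ according to whether $h_i$ is correct or not, to get $a_j - \delta_{ji} = \expect{p}{\one{h_j(x)=y \,\wedge\, h_i(x)=y}}$. The conjunction inside the indicator is symmetric in $i$ and $j$, so both expressions are expectations of the indicator of the same event, and therefore they are equal. This establishes the claim.

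There is no real obstacle here; the identity is essentially a conservation statement expressing that the mass of ``both correct on $x$'' can be computed either as ``correct with $h_i$, minus exclusively correct with $h_i$'' or as ``correct with $h_j$, minus exclusively correct with $h_j$,'' and these two viewpoints must coincide. The only care required is to state the partition of the correctness event precisely so that the linearity step is immediate.
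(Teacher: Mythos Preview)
Your proof is correct and takes essentially the same approach as the paper: both arguments identify $a_i-\delta_{ij}$ and $a_j-\delta_{ji}$ with the probability that \emph{both} classifiers are correct, the paper doing so via a $2\times 2$ confusion matrix and you via an explicit partition of the correctness event. The content is identical; only the presentation differs.
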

\begin{proof}
    Visual proof by  building a confusion matrix split up by accuracy on the label:
    \renewcommand{\arraystretch}{2}
    \begin{table}[h]
    
    \begin{center}
    \begin{tabular}{c|c|c|c}
    \multicolumn{1}{c}{} & \multicolumn{1}{c}{\blue{$h_j = y$}}& \multicolumn{1}{c}{\blue{$h_j \ne y$}}&  \multicolumn{1}{c}{}  \\
     \cline{2-3}
    \red{$h_i=y$} & $a_i-\delta_{ij}$ & $\delta_{ij}$ & \green{$ =a_i$}\\
    \cline{2-3}
    \red{$h_i \ne y$} & $\delta_{ji}$ & $1- a_i-\delta_{ji}$ & \green{$=1-a_i$} \\
     \cline{2-3}
    \multicolumn{1}{c}{} & \multicolumn{1}{c}{\green{$=a_j$}}& \multicolumn{1}{c}{\green{$=1-a_j$}}&  \multicolumn{1}{c}{}
\end{tabular}
    \end{center}
    \label{tab:dist_of_preds}
\end{table}

Explanation of the table:
\begin{itemize}
     \item the cells are partitioned by buckets according to the correctness of the classsifiers. This helps us understand the discrepancy and overlap of accurate predictions between the classifiers.
    \item  We can immmediately observe that the 1st row sums to $a_i$, and the 1st column sums to $a_j$. Similarly, the 2nd row and 2nd column sum up to $1-a_i, 1- a_j$, repsectively. So too, $d_{ij}$ and $\delta_{ji}$ are in the top right and bottom left cell by definition. 
\end{itemize}
From the sums of the rows and columns, we observe the following equality from the top-left cell:
    \begin{equation}
    \label{eq:tying_accuracies_and_deltas}
        a_i - \delta_{ij} = a_j - \delta_{ji}
    \end{equation}
\end{proof}

\paragraph{\Cref{prop:utils_of_players} (Market Share).}
\begin{proof}
    from the definition of our problem setting, classifier $h_j$ gets: 
    \begin{itemize}[leftmargin=0.5cm]
        \item Full market share on the points that only $h_j$ is correct on: $\frac{1}{n} \sum_{i=1}^n \one{h_j(x) = y_i \land h_i(x) \ne y_i}$
        \item Half market share on the points that they are both correct on: $\frac{1}{2}\cdot\frac{1}{n} \sum_{i=1}^n \one{h_j(x) = y_i \land h_i(x) = y_i}$
    \end{itemize}
    By the definition of partial discrepancy (Eq.~\ref{eq:discrepancy}), the first bullet is exactly $\delta_{h_j}(h_i)$, and the second bullet is equal to $\frac{1}{2}(a_j - \delta_{h_j}(h_i))$.
    
    We then receive: $U(h_j|h_i) = \delta_{h_j}(h_i) + \frac{1}{2}(a_j - \delta_{h_j}(h_i)) = \frac{1}{2}(a_j + \delta_{h_j}(h_i))$  
\end{proof}

\paragraph{\Cref{prop:acc_iff_ms} (Market share $\leftrightarrow$ accuracy).}
We will first prove a helpful claim:
\begin{helpful_claim}
\label{hc:higher_acc_is_higher_partial_delta}
    For any 2 classifiers $h_1,h_2$ i with accuracies $a_1,a_2$, respectively, it stands that 
    $\delta_{12} > \delta_{21} \Leftrightarrow a_1 > a_2$.
\end{helpful_claim}
\begin{proof}
     From \cref{obs:rel_between_acc_and_deltas}, 
    $a_{1} -\delta_{12} = a_2  - \delta_{21}$, meaning: $ a_{1}  = a_2 + \delta_{12} - \delta_{21} $.
    Therefore:
    \begin{equation*}
    \begin{split}
        a_1 > a_2 \Leftrightarrow & \  a_1  - a_2 \ge 0 \\ 
         \Leftrightarrow & \  a_2 + \delta_{12} - \delta_{21} - a_2 \ge 0 \\ 
        \Leftrightarrow & \ \delta_{12} - \delta_{21} \ge 0 \\
        \Leftrightarrow & \ \delta_{12} \ge \delta_{21} 
    \end{split}
    \end{equation*}
    Where the 2nd inequality comes from substituting $a_1 =  a_2 + \delta_{12} - \delta_{21} $
\end{proof}

We will now prove \cref{prop:acc_iff_ms}:
\begin{proof}
    From \cref{prop:utils_of_players}, We know:
    $\mu(h_1|h_2) = a_{1} + \delta_{12}$, and 
    $\mu(h_2|h_{1}) = a_2 + \delta_{21}$.

    \begin{equation}
    \begin{aligned}        
     \quad & \mu(h_1 \mid h_2) > \mu(h_2 \mid h_1) 
     \\
     \Leftrightarrow &
     \quad a_{1} + \delta_{12} > a_2 + \delta_{21} 
    \\
    \Leftrightarrow &
    \quad a_1 > a_2 
    \end{aligned}
    \end{equation}

    Where the last inequality follows from \cref{hc:higher_acc_is_higher_partial_delta}.
\end{proof}

\paragraph{\Cref{lem:2x2_pne}.}
We will start with a helpful claim:
\begin{helpful_claim}
\label{hc:pne_inequalities}
    Let $h_1,h_2 \in \mathcal{H}$.
    Then: 
    \begin{itemize}[leftmargin=0.5cm]
        \item $ U(h_1|h_2) > U(h_2|h_2) \Leftrightarrow \delta_{h_1} > \frac{1}{2}\delta_{h_2} $ 
        \item $ U(h_2|h_1) > U(h_1|h_1) \Leftrightarrow \delta_{h_2} > \frac{1}{2}\delta_{h_1} $ 
    \end{itemize}
    Additionally, the above inequalities hold \textbf{if and only if} $|\delta_{h_1} - \delta_{h_2}| < \frac{1}{3}(\delta_{12}+\delta_{21})$.   
\end{helpful_claim}

\begin{proof}
    From \cref{prop:utils_of_players} we know that $U(h_1|h_1) = \frac{1}{2}a_1,U(h_2|h_2) = \frac{1}{2}a_2, U(h_i|h_j) = \frac{1}{2}(a_i + \delta_{ij}) $.
Therefore,
    \begin{equation}
        \begin{aligned}
            U(h_1|h_2) > U(h_2|h_2) &\Leftrightarrow \frac{1}{2}(a_1 + \delta_{12}) > \frac{1}{2}(a_2) \\
            &\Leftrightarrow a_1 + \delta_{h_1} > a_1 + \delta_{h_2} - \delta_{h_1} \\
            &\Leftrightarrow  \delta_{h_1} > \delta_{h_2} - \delta_{h_1} \\
            &\Leftrightarrow  2\delta_{h_1} > \delta_{h_2} \\
            &\Leftrightarrow  \delta_{h_1} > \frac{1}{2}\delta_{h_2} \\
        \end{aligned}
    \end{equation}
    Where the substitution of the RHS in the 2nd line comes from \cref{obs:rel_between_acc_and_deltas}.
    
    Similarly, 
    \begin{equation}
        \begin{aligned}
            U(h_2|h_1) > U(h_1|h_1) &\Leftrightarrow a_2 + \delta_{h_2} > a_1 \\
            &\Leftrightarrow a_2 + \delta_{h_2} > a_2 + \delta_{h_1} - \delta_{h_2} \\
            &\Leftrightarrow  \delta_{h_2} > \delta_{h_1} - \delta_{h_2} \\
            &\Leftrightarrow  2\delta_{h_2} > \delta_{h_1} \\
            &\Leftrightarrow  \delta_{h_2} > \frac{1}{2}\delta_{h_1} \\
        \end{aligned}
    \end{equation}

Now assume that the inequalities hold, meaning $\delta_{h_1} > \frac{1}{2}\delta_{h_2}$ and $\delta_{h_2} > \frac{1}{2}\delta_{h_1}$. Then,

\begin{equation}
    \begin{aligned}[b]
        \delta_{h_1} > \frac{1}{2}\delta_{h_2} &\rightarrow  \delta_{h_1} + \delta_{h_2} > \frac{3}{2}\delta_{h_2} \\
        &\rightarrow  \delta_{h_2} < \frac{2}{3}(\delta_{h_1} + \delta_{h_2}) 
    \end{aligned}
\end{equation}
Similarly, $\delta_{h_1} < \frac{2}{3}(\delta_{h_1} + \delta_{h_2})$. This means that $\maxx{\delta_{h_1},}{\delta_{h_2}} < \frac{2}{3}(\delta_{h_1} + \delta_{h_2})$, and therefore $|\delta_{h_1} - \delta_{h_2} |< \frac{1}{3}(\delta_{h_1} + \delta_{h_2})$.

[Note that all derivations apply both ways, meaning if $|\delta_{h_1} - \delta_{h_2} |< \frac{1}{3}(\delta_{h_1} + \delta_{h_2})$, then  $\delta_{h_1} > \frac{1}{2}\delta_{h_2}$ and $\delta_{h_2} > \frac{1}{2}\delta_{h_1}$.]

\end{proof}

Using \cref{hc:pne_inequalities}, the proof of \cref{lem:2x2_pne} is almost immediate:
from \cref{obs:rel_between_acc_and_deltas} we know that $a_1 - a_2 = \delta_{12} - \delta_{21}$.

The 2 players will choose differing strategies in the 2x2 game if and only if
$U(h_1|h_2) > U(h_2|h_2) \ \textbf{and} \ U(h_2|h_1) > U(h_1|h_1)$, which holds if and only if $\delta_{h_1} > \frac{1}{2}\delta_{h_2} \textbf{and} \ \delta_{h_2} > \frac{1}{2}\delta_{h_1}$, which holds if and only if $|\delta_{h_1} - \delta_{h_2}| < \frac{1}{3}(\delta_{12}+\delta_{21})$.
Since we know $a_1 - a_2 = \delta_{12} - \delta_{21}$, this proves \cref{lem:2x2_pne}.

\paragraph{\Cref{thm:2x2_pne} (2x2 PNEs.)}
From the inequalities in \cref{lem:2x2_pne}, we receive the conditions for which the providers play anti-coordinated strategies.

If one of these inequalities doesn't hold, meaning either $U(h_1|h_2) < U(h_2|h_2)$ or $U(h_2|h_1) < U(h_1|h_1)$, then the dominant strategy is $h_1,h_2$, respectively, depending on which inequality does not hold.
\paragraph{\Cref{thm:mlr_br} (Threshold best-responses under MLR).}
\begin{proof}
    Firstly,we note that since $g$ is continuous\footnote{We note that the theorem holds for categorical distributions as well, where the provider will simply go to the nearest point $x: g(x) \ge \frac{1}{2}$ if to the left, or $x: g(x) \le 2$ if doing a best-response to the right.} and increasing strongly in $[a,b]$, $g^{-1}$ is well defined.
  
    We will split $[a,b]$ into sub-intervals $[a,h] , [h,b]$ and calculate the best response in each interval:

    Let $h$ be the strategy we are responding to.
    

    It is clear that for all strategies in $[a,h]$, the utility on all points in $[h,b]$ is constant, since the classification on those points is the same. 
    So within the interval $[a,h]$, the player is looking to maximize $U_{[a,h]}$.
    Similarly, when considering the best response in interval $[h,b]$,  we need only maximize $U_{[h,b]}$, since for all points in the interval $U_{[a,h]}$ is the same.

    Let $ U_{[a,b]}(h|h) = c$. 
    
    It stands that $\forall h_1 \in [a,h]$: $$U_{[a,b]}(h_1|h) -  c = \int_{h_1}^h f_1(x) - \frac{1}{2}f_0(x) dx$$
    
    This is a straightforward expression of the utility. Any points outside $[h_1,h]$ have an identical classification for both $h$ and $h_1$, and so the only difference in utility is found inside the interval $[h_1,h]$, which $h_1$ classifies as positive and $h$ classifies as negative. 
    Therefore the gain in utility is $\int_{h_1}^h f_1(x) dx$, but the loss on the negative points is $\frac{1}{2}\int_{h_1}^h 
     f_0(x) dx$, since the utility on those points would be otherwise shared with $h$.
     
    Similarly, $\forall h_1 \in [h,b]$: $$U_{[a,b]}(h_1|h) -  c = \int_{h}^{h_1} f_0(x) - \frac{1}{2}f_1(x) dx$$

    Therefore,  $BR_{[a,h]}(h) = \underset{h_1 \in [a,h]}{\argmax} \ U_{[a,b]}(h_1|h) =  \underset{h_1 \in [a,h]}{\argmax} \  \int_{h_1}^h f_1(x) - \frac{1}{2}f_0(x) dx $.
    
    And $BR_{[h,b]}(h) = \underset{h_1 \in [h,b]}{\argmax} \ U_{[a,b]}(h_1|h) =  \underset{h_1 \in [h,b]}{\argmax} \  \int_{h_1}^h f_0(x) - \frac{1}{2}f_1(x) dx $.

    We will divide into cases based on the value of $g(h)$:

    \underline{\textbf{Case 1: $1/2 \le g(h) \le 2$:}}

    We will calculate $BR_{[a,h]}(h)$.
    Since $g$ is strictly increasing in $[a,b]$, the value $f_0(x) - \frac{1}{2}f_1(x)$ is positive for all points $x$ where $g(x) \ge \frac{1}{2}$. 
    
    Therefore, $\forall h_1 < h_2 \in [a,h]$, if $g(h_1) \ge \frac{1}{2}$  then $ U_{[a,h]}(h_1|h) \ge U_{[a,h]}(h_2|h)$
    
    In this case: $BR_{[a,h]}(h)= \max\left(a,g^{-1}\left(1/2 \right)\right)$, since in the case where  $\forall h_1 \in [a,h], g(h_1) > \frac{1}{2}$, then the maximum utility is found at $a$. 
    
    Similarly, the same argument holds to derive that: $BR_{[h,b]}(h) = \max\left(b,g^{-1}\left(2 \right)\right) $

    \underline{\textbf{Case 2: $g(h) > 2 $:}}
    
    In this case, $\forall h_1 > h \ \text{it stands that} \ U_{[h,b]}(h_1|h) < U_{[h,b]}(h|h)$, since the value $f_0(x) - \frac{1}{2}f_1(x)$ is negative for all points in $[h,b]$.

    Therefore the best-response is found in the interval $[a,h]$, in which case the derivation from the previous case holds, and 
    so $BR_{[a,h]}(h)= \max\left(a,g^{-1}\left(1/2 \right)\right)$.
    
    \underline{\textbf{Case 3: $g(h) < 1/2$:}}
    
    In this case, $\forall h_1 < h \ \text{it stands that} \  U_{[a,h]}(h_1|h) < U_{[a,h]}(h|h)$, since the value $f_0(x) - \frac{1}{2}f_1(x)$ is negative for all points in $[a,h]$.

    Therefore the best-response is found in the interval $[h,b]$, in which case the derivation from the previous case holds, and 
    so $BR_{[h,b]}(h) = \max\left(b,g^{-1}\left(2 \right)\right) $.

    So across all cases, we find that in the interval $[a,b]$, there are only 2 possible best responses: 
    
    $\max\left(b,g^{-1}\left(2 \right)\right)$, and $\min\left(a,g^{-1}\left(1/2 \right)\right) $.

\end{proof}

\paragraph{\Cref{corr:thresh-2x2_reduction}.}
\begin{proof}
    Immediate from the fact that the strategy space of both players can be reduced to the 2 candidate best-responses stipulated in \cref{thm:mlr_br}.
\end{proof}
\paragraph{\Cref{thm:generalized_br_1d} (General threshold best-responses).}
\begin{proof}
As in the proof of \cref{thm:mlr_br}, We will split $[a,b]$ into sub-intervals $[a,h] , [h,b]$ and calculate the set of possible best responses for each interval:

Let $h$ be the strategy/threshold we are responding to.
We will rewrite the explicit forms for $U_{[a,b]}$:

$\forall h_1 \in [a,h]$: $$U_{[a,b]}(h_1|h) -  c = \int_{h_1}^h f_1(x) - \frac{1}{2}f_0(x) dx$$

$\forall h_1 \in [h,b]$: $$U_{[a,b]}(h_1|h) -  c = \int_{h_1}^h f_0(x) - \frac{1}{2}f_1(x) dx$$
Where $c  = U_{[a,b]}(h|h)$.

As explained in the proof of \cref{thm:mlr_br}, to calculate the best response in $[a,h]$,it is sufficient to maximize $U_{[a,h]}$; and to calculate the best-response in $[h,b]$, it is sufficient to maximize $U_{[h,b]}$.

We will calculate the best response in $[a,h]$:
\begin{helpful_claim}
\label{hc:outside_interval_br_left}
    Let $(h_1, h_2)$ be any open interval in $[a,h]$ such that $\forall x \in (h_1,h_2) \ \text{it holds that} \ g(x) > \frac{1}{2}$.
    
    Then $\forall x \in (h_1, h_2] \to U_{[a,h]}(h_1|h) > U_{[a,h]}(x|h)$.
\end{helpful_claim}
 \begin{proof}
    Let  $x \in (h_1, h_2]$.
    
     Using the derivations above of relative market shares between thresholds, we will compare the market shares of $x$ and $h_1$:

     $U_{[a,h]}(h_1|h) - U_{[a,h]}(x|h) = \int_{h_1}^x f_1(u) - \frac{1}{2}f_0(u) du$.

    Since we are given that in the segment $[h_1,x], \ g > \frac{1}{2}$, then it folows that $\forall u, f_1(u) - \frac{1}{2}f_0(u) > 0$, and therefore the integral must be positive and hence $U_{[a,h]}(h_1|h) - U_{[a,h]}(x|h) > 0$.
     
 \end{proof}
\begin{helpful_claim}
\label{hc:inside_interval_br_left}
    Let $(h_1, h_2)$ be any interval in $[a,h]$ such that $\forall x \in (h_1,h_2) \ \text{it holds that} \ g(x) < \frac{1}{2}$.
        
    Then $\forall x \in [h_1, h_2) \to U_{[a,h]}(h_2|h) > U_{[a,h]}(x|h)$.
\end{helpful_claim}
\begin{proof}
     We will show that $U_{[a,h]}(h_2|h) - U_{[a,h]}(x|h) > 0$, in a similar manner to \cref{hc:outside_interval_br_left}:
     
    Let  $x \in (h_1, h_2]$.
    We will compare the market shares of $x$ and $h_1$:

     $U_{[a,h]}(h_2|h) - U_{[a,h]}(x|h) = \int_{x}^{h_1} f_0(u) - \frac{1}{2}f_1(u) du$.

    Since we are given that in the segment $[x,h_2], \ g < \frac{1}{2}$, then it folows that $\forall u, f_0(u) - \frac{1}{2}f_1(u) > 0$, and therefore the integral must be positive and hence $U_{[a,h]}(h_2|h) - U_{[a,h]}(x|h) > 0$.
     
\end{proof}
Using the helpful claims, we can see that $BR_{[a,h]}(h) \in \{a,h\} \cup  P_+^{-1}(1/2)$:

Let $h_1 \notin \{a\, , h\} \cup  P_+^{-1}(1/2)$.

\underline{Case 1 - $g(h_1) > \frac{1}{2}$:}

Let $(x,y) \subseteq [a,h]$ be the largest consecutive interval that includes $h_1$ such that $\forall h' \in (x,y) \to g(h') > \frac{1}{2}$.
Since $f_0,f_1$ are continuous, then we know $g$ is continuous.
Therefore, either $g(x) = \frac{1}{2}$ and $g'(x)>0$, or $x = a$.
From \cref{hc:outside_interval_br_left}, we receive that $U_{[a,h]}(x|h) > U_{[a,h]}(h_1|h)$, and therefore $h_1$ cannot be a best response.

\underline{Case 2 - $g(h_1) < \frac{1}{2}$:}

Let $(x,y) \subseteq [a,h]$ be the largest consecutive interval that includes $h_1$ such that $\forall h' \in (x,y) \to g(h') < \frac{1}{2}$.
Since $f_0,f_1$ are continuous, then we know $g$ is continuous.
Therefore, either $g(y) = \frac{1}{2}$ and $g'(x)>0$, or $y = h$.
From \cref{hc:inside_interval_br_left}, we receive that $U_{[a,h]}(y|h) > U_{[a,h]}(h_1|h)$, and therefore $h_1$ cannot be a best response.

We define similar claims to calculate the set of possible best responses in $[h,b]$:
\begin{helpful_claim}
\label{hc:outside_interval_br_right}
    Let $(h_1, h_2)$ be any open interval in $[h,b]$ such that $\forall x \in (h_1,h_2) \ \text{it holds that} \ g(x) > 2$.
    
    Then $\forall x \in (h_1, h_2] \to U_{[h,b]}(h_1|h) > U_{[a,h]}(x|h)$.
\end{helpful_claim}
\begin{proof}
    We will show that $U_{[h,b]}(h_1|h) - U_{[h,b]}(x|h) > 0$:
     
    Let  $x \in (h_1, h_2]$.
    We will compare the market shares of $x$ and $h_1$:

     $U_{[h,b]}(h_1|h) - U_{[h,b]}(x|h) = \int_{h_1}^{x} \frac{1}{2}f_1(u) - f_0(u) du$.

    Since we are given that in the segment $[x,h_2], \ g >2$, then it follows that $\forall u, \frac{1}{2}f_1(u) - f_0(u) > 0$, and therefore the integral must be positive and hence $U_{[h,b]}(h_1|h) - U_{[h,b]}(x|h) > 0$.
\end{proof}

\begin{helpful_claim}
\label{hc:inside_interval_br_right}
    Let $(h_1, h_2)$ be any open interval in $[h,b]$ such that $\forall x \in (h_1,h_2) \ \text{it holds that} \ g(x) < 2$.
        
    Then $\forall x \in [h_1, h_2) \to U_{[a,h]}(h_2|h) > U_{[a,h]}(x|h)$.
\end{helpful_claim}
\begin{proof}
    We will show that $U_{[h,b]}(h_2|h) - U_{[h,b]}(x|h) > 0$:
     
    Let  $x \in (h_1, h_2]$.
    We will compare the market shares of $x$ and $h_1$:

     $U_{[h,b]}(h_2|h) - U_{[h,b]}(x|h) = \int_{x}^{h_2} f_0(u)- \frac{1}{2}f_1(u) du$.

    Since we are given that in the segment $[x,h_2], \ g < 2$, then it follows that $\forall u, f_0(u)- \frac{1}{2}f_1(u) > 0$, and therefore the integral must be positive and hence $U_{[h,b]}(h_2|h) - U_{[h,b]}(x|h) > 0$.

\end{proof}
Using the helpful claims, we can see that $BR_{[h,b]} \in \{h,b\} \cup  P_+^{-1}(2)$:

Let $h_1 \notin \{b\} \cup  P_+^{-1}(2)$.

\underline{Case 1 - $g(h_1) > 2$:}

Let $(x,y) \subseteq [a,b]$ be the largest consecutive interval that includes $h_1$ such that $\forall h' \in (x,y) \to g(h') > 2$.
Since $f_0,f_1$ are continuous, then we know $g$ is continuous.
Therefore, either $g(x) = 2$ and $g'(x)>0$, or $x = a$.
From \cref{hc:outside_interval_br_right}, we receive that $U_{[a,h]}(x|h) > U_{[a,h]}(h_1|h)$, and therefore $h_1$ cannot be a best response.

\underline{Case 2 - $g(h_1) < 2$:}

Let $(x,y) \subseteq [h,b]$ be the largest consecutive interval that includes $h_1$ such that $\forall h' \in (x,y) \to g(h') < 2$.
Since $f_0,f_1$ are continuous, then we know $g$ is continuous.
Therefore, either $g(y) = 2$ and $g'(x)>0$, or $y = b$.
From \cref{hc:outside_interval_br_right}, we receive that $U_{[a,h]}(y|h) > U_{[a,h]}(h_1|h)$, and therefore $h_1$ cannot be a best response.

\end{proof}
\paragraph{\Cref{prop:gen-br-1rd} 
(Convergence after 1 round).}

\begin{proof}
    Let $h$ be any starting classifier.

    At timestep $t=0$, we asume both players are at $h$.
    
    At timestep $t=1$, player i plays $h_i^1 = BR(h)$,  and player j plays $h_j^1 = BR(h_1^1)$.

    Let $h_{min} = \minn{h_i^1, }{h_j^1}, h_{max} = \maxx{h_i^1, }{h_j^1}$.
    
    Firstly, we will argue that there exists an optimal-accuracy classifier $h_{opt}$ such that $h_{opt} \in [h_{min}, h_{max}]$:

    Assume for the sake of contradiction that this isn't the case. Then there must exist some $h_{opt}$ either to the left of $h_{min}$ or to the right of $h_{max}$. Let's assume w.l.o.g that there exists some $h_{opt} > h_{max}$.
    Then $\mu(h_{opt}|h_{min})>\mu(h_{max}|h_{min})$: $a_{opt} > a_{max}$ by definition, and $\delta_{opt, min} > \delta_{max,min}$, since when $h_{min} < h_{max} < h_{opt}$, then $\delta_{max,min} \subset \delta_{opt,min}$.\footnote{containment here refers to the points that contribute to the values of $\delta$} 

    Therefore, exists some $h_{opt} \in [h_{min}, h_{max}]$.
    
    We now argue that $(h_i^1, h_j^1)$ is a PNE.

    Assume without loss of generality $h_i^1 < h_{opt}$. This generalization is without loss since we are proving a best-response equilibrium symmetrically for both thresholds, so it does not matter which player is on which side of $h_{opt}$.
    
    From the proof of \cref{thm:generalized_br_1d}, we know that $h_i^1 = \underset{h \in \{a,h_{opt}, P_+^{-1}(1/2)\}}{\argmax} U(h|h_{opt})$.

    [if $h_i^1 = h_{opt}$ we are done.]

    We know then that $h_j^1 \ge h_{opt}$.
    
     Assume for the sake of contradiction that $h_j^1 < h_{opt}$ -
     
     Then  $\delta_{h_{opt}}(h_i^1) > \delta_{h_j^1}(h_i^1)$ , and from the optimality of $h_{opt}$: $a_{opt} \ge a_{h_j^1}$, and therefore $U(h_{opt}|h_i^1) > U(h_j^1|h_i^1) $, contradiction to $h_j^1$ being a best-response.

     Now,  $h_j^1 \ge h_{opt} > h_i^1$. 

     We will argue $h_i^1$ is a best-response to $h_j^1$:

     $\forall h \in (h_{opt}, h_j^1]$, the utility of $h_{opt}$ is greater, similarly to how was argued above.
     
     $ \forall h < h_{opt}$, if $h_i^1$ is a best-response to $h_{opt}$, then it must also be a best-response to $h_j^1$, since the accuracy stays the same and the discrepancy grows in an equal amount for all classifiers $h<h_{opt}$. 
     
     Therefore, both classifiers $h_i^1, h_j^1$ are best responses to each other and therefore are a PNE. 
\end{proof}

\paragraph{\Cref{prop:i-improve-you-improve} (\quot{I improve, you improve}).}
\begin{proof}
From the proof of convergence in \cref{prop:gen-br-1rd}, we receive that in all threshold games, the players go to either side of an optimal classifier $h_{opt}$.

Assume that player i moved to as an initial best-response $h_i^1$ to some $h^{opt}$.
Then player j's best-response $h_j^1$  is such that $h_{opt}$ is between $h_i^1$ and $h_j^1$.
Since $h_j^1$is a BR, we know the market share of player $j$ increases (weakly).

For player $i$, from \cref{prop:utils_of_players}, $\mu_i = \frac{1}{2}(a_i +\delta_{ij})$.

$a_i$ remains the same, but $\delta_{ij}$ increase because $h_j^1$ went further away to the other side of $h_{opt}$, and as explained in the proof of \cref{prop:gen-br-1rd}, $\delta_{h_i^1, opt} \subset \delta_{h_i^1, h_j^1}$.

Therefore $\mu_i$ increases as well.

\end{proof}

\paragraph{\Cref{prop:ms_increases} (Market share increases during competition).}
\begin{proof}
Assume the players started from $h^0$:

    $\mu(h^0|h^0) = \frac{1}{2}a^0$

Let $(h_1, h_2)$ be anyt equilibrium.

    $\mu(h_1| h_2) = \frac{1}{2}(a_1 + \delta_{12})$

    We will prove $a_1 + \delta_{12} \ge a^0$:

    Assume $a_1 + \delta_{12} < a^0$.

    Then $a ^0 + \delta_{h^0,2} > a_1 + \delta_{12} $, contradiction to $h_1$ being a best-response to $h_2$.
\end{proof}

\paragraph{\Cref{corr:welfare} (Welfare increases during competition).}

This is immediate from \cref{prop:ms_increases} since $SW = \sum_i \mu_i$.

\paragraph{\Cref{prop:bounded_market_concentration}}
\begin{proof}
    Let $h_1,h_2$ be any PNE.
    
    Assume for the sake of contradiction that $\mu(h_1|h_2) > 2\cdot \mu(h_2|h_1)$. 

    From \cref{prop:utils_of_players} we receive that $\mu(h_2|h_1) = \frac{1}{2}(a_2 + \delta_{21})$.

    We will show that $\mu(h_1|h_1) = \frac{1}{2}a_1 > \frac{1}{2}(a_2 + \delta_{21})= \mu(h_2|h_1)$:

    W know that $\mu(h_1|h_2) > 2\cdot \mu(h_2|h_1) \Rightarrow a_1 + \delta_{12} > 2\cdot(a_2 + \delta_{21})$.

    We also know $\delta_{12} \le a_1$, by definition of  partial discrepancy.

    Therefore $a_1 + a_1 \ge a_1 + \delta_{12} > 2\cdot(a_2 + \delta_{21})$

    And so: $a_1 > a_2 + \delta_{21} \Rightarrow \mu(h_1|h_1) > \mu(h_2|h_1)$, contradiction to  $(h_1,h_2)$ being a PNE.

\end{proof}

\section{Additional theoretical results} \label{appx:add_theory}

\subsection{Characterization of our problem setting as a congestion game.} \label{appx:congestion}
In \cref{sec:setup}, we mentioned that our problem setting is proven to have a PNE, a result shown by \citep{ben2019regression} through the use of an exact potential function.
Additionaly, \citep{monderer1996potential} show that every potential game is isomorphic to some congestion game; this connection however is not always readily evident.
We show here the exact reduction of our problem setting to a congestion game, and highlight that the cost function is negative, which may be counterintuitive to more classic settings of congestion games.
\begin{observation}
\label{obs:congestion_game}
    Our problem setting is reduced to the congestion game $(N,M,(H_i)_{i\in N} ,(c_j)_{j\in M})$
    Where:
    \begin{itemize}
        \item N is the number of players
        \item M is the samples in the training set upon which the players want to gain market share
        \item $H_i$ is the hypothesis class available to player $i$
        \item $c_j(k) = -\frac{1}{k}$ is the cost function assigned to each sample, where $k$ is the number of companies accurate on consumer $j$
    \end{itemize}
\end{observation}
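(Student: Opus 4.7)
}
The plan is to exhibit an explicit mapping from our accuracy market (restricted to the empirical distribution) to the congestion game $(N,M,(H_i),(c_j))$, and then verify that the utility of each player in our setting coincides (up to an affine transformation) with the negative of the congestion-game cost. Recall that a congestion game is specified by players choosing subsets of resources, where each resource $j$ carries a cost $c_j(k)$ that depends only on the number $k$ of players using it, and each player's cost is additive across their chosen resources.

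First I would set up the identification between strategies and resource subsets. For each player $i \in N$ and strategy $h_i \in H_i$, define the induced resource set
\[
R_i(h_i) = \{ j \in M : h_i(x_j) = y_j \},
\]
i.e., the indices of the training examples on which $h_i$ is correct. Given a joint strategy profile $\h = (h_1,\dots,h_n)$, the load on resource $j$ is then
\[
k_j(\h) = |\{ i \in N : j \in R_i(h_i) \}| = \kappa(x_j),
\]
which is precisely the number of providers accurate on example $x_j$. This ties the congestion-game load variable directly to the quantity $\kappa(x)$ appearing in the tie-breaking rule of Eq.~\eqref{eq:user_choice}.

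Next I would compute the congestion cost incurred by player $i$ under $c_j(k) = -1/k$ and compare to the empirical market share. By definition of a congestion game,
\[
C_i(\h) = \sum_{j \in R_i(h_i)} c_j(k_j(\h)) = -\sum_{j : h_i(x_j)=y_j} \frac{1}{\kappa(x_j)}.
\]
On the other hand, the empirical market share of $s_i$ on the training set $\smplst$, following Eq.~\eqref{eq:sp_utility_exp} with uniform tie-breaking, is
\[
\hat{\ms}_i(\h) = \frac{1}{m} \sum_{j=1}^m \frac{\one{h_i(x_j)=y_j}}{\kappa(x_j)} = -\frac{1}{m} \, C_i(\h).
\]
Thus minimizing $C_i$ is equivalent to maximizing $\hat{\ms}_i$, so the two games have the same best-response correspondences and hence the same set of pure Nash equilibria. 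Existence of a PNE in our setting then follows from Rosenthal's theorem applied to the congestion game, recovering the result of \citet{ben2017best,ben2019regression} via the canonical potential.

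The only mild subtlety, and the one place I would be careful, is that the congestion-game formalism usually presumes strategies \emph{are} subsets of $M$, whereas strategies in $H_i$ are classifiers that may induce the same correctness pattern. This is harmless: payoffs depend on $h_i$ only through $R_i(h_i)$, so one can either pass to the quotient $H_i / \!\sim$ where $h \sim h'$ iff $R_i(h)=R_i(h')$, or equivalently view the congestion game as having strategy set $\{R_i(h) : h \in H_i\} \subseteq 2^M$. A second, purely cosmetic, point is the negative cost; since congestion games are defined for arbitrary real-valued $c_j$, this poses no issue, and it reflects the fact that ``congestion'' on a correctly-classified example is beneficial (shared) rather than harmful, merely diluted by the factor $1/k$.
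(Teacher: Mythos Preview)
Your proposal is correct and follows essentially the same route as the paper: identify each classifier with the subset of training examples it classifies correctly, observe that the load on example $j$ equals $\kappa(x_j)$, and verify that $C_i(\h) = -m\,\hat{\ms}_i(\h)$ so that best responses and equilibria coincide. The only cosmetic difference is that the paper additionally writes out the Rosenthal potential $\Phi(\h) = \sum_{j=1}^m \sum_{k=1}^{n_j} c_j(k)$ explicitly, whereas you simply invoke Rosenthal's theorem; both amount to the same argument.
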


\begin{proof}
    Firstly, we will show that the game that is defined above is indeed a congestion game.
    We can observe this almost immediately, as the cost function (while negative) is monotone increasing with $n_j$, and the cost is per-sample (equal for each player).
    
    Additionally, the hypothesis class $H_i$ has a one-to-one function $a:H \to \mathbb{P}(M)$ which is $a(h) = \{(x,y) \in M: h(x) = y\}$. Therefore, each strategy $h$ is equivalent to the strategy $a(h)$ and this is a subset of the facilities $M$.
    
    From the game that is defined, we receive a potential function $\Phi$ such that $ \forall i, \ \Delta \Phi = \Delta C_i$.

    The potential function is: $\Phi(\vec{h}) = \sum_{j=1}^m \sum_{k=1}^{n_j} c_j(k)$

    (For each sample, we take the sum of $c(1),\dots, c(n_j)$, and since $c$ is monotone increasing, minimizing the potential means minimizing both players being accurate for the same classifier)
    
    Now, we will show that our problem setting reduces to this game by showing the equivalence between maximizing the player utility in the problem setting and minimizing the player cost in the above congestion game, meaning,  $\forall i,\  \Delta U_i = - \Delta C_i $.
    
    Let $s_i^k$ be the number of samples that player $i$ is accurate on along with $k-1$ other players.
    
    We observe that $$C_i(\vec{h}) =\sum_{j \in a_i(h_i)} c_j(n_j(\vec{h})) = \sum_{k=1}^n s_i^k \cdot c(k) = -\sum_{k=1}^n s_i^k \cdot \frac{1}{k} = -U_i(\vec{h})$$
     (where the middle equality comes from rearranging the samples in bins of how many other players were accurate, and and then the cost is constant in that bin).

\end{proof} 

\extended{
\subsection{CORRECTNESS ADVANTAGE MODEL}
\todo{...}
}

\section{Experimental details} \label{appx:exp_details}
\subsection{Data details}

\label{appx:exp_details:data}
All of the experiments on real data were studied on 3 datasets: \feature{compas-arrest, compas-violent}, and \feature{adult}.
The \feature{compas} datasets originated from studies of recidivism in the United States \citep{angwin2016machine}, and are used to predict if a criminal will be rearrested for general crimes and violent crimes,  respectively.
The \feature{adult} dataset is used to predict whether the an individual's income exceeds \$50K.
\paragraph{Preprocessing Details:}
\begin{itemize}[leftmargin=0.3cm]
    \item \textbf{Adult:}
    The adult dataset was imported in python through the \ModelName{uciml} library. All of the categorical features were one-hot encoded, and numerical features remain unprocesssed. 
    To enable a balanced learning task, SMOTE resampling was applied from the \feature{imblearn} package to attain a 50\% positive class ratio. 
    After the above preprocessing, 10,000 samples were chosen randomly, resulting in a dataset with $n = 10,000$ samples and $d = 100$ features.
    \item \textbf{COMPAS-Arrest/Violent:}
    The COMPAS-Arrest dataset was preprocessed for analysis by \citet{marx2020predictive}, and a copy of their csv files are included in their code. The csv files can be found at :\\
\url{https://github.com/charliemarx/pmtools/tree/master/data}.\\
    Both datasets contain $d=21$ preprocessed binary (previously one-hot encoded) features.
    The COMPAS-Arrest dataset contains $n=6,172$ samples and has a positive class ratio of $45.5\%$.\\
    The COMPAS-Violent dataset also originally had $6,172$ samples, however the positive ratio was 88.8\%. Therefore SMOTE upsampling was applied to the negative class to bring the positive ratio to  50\%. The total number of samples for which we use COMPAS-Violent is then $n= 10,960$.
\end{itemize}

\subsection{Model Class details}
\label{appx:exp_details:models}

For our empirical anlysis, we analyzed results of the experiments with 3 model class variants that were used as the effective strategy space of the service providers. We note that since the objective of this work is to understand the ability of providers to learn based on the importance of the \emph{samples}, we kept the hyperparameter tuning minimal, so as not to forcefully overfit the data.
\begin{enumerate}
    \item \textbf{Linear SVM:}
    \begin{itemize}
       
        \item Hyperparameters: The regularization parameter $C=1.0$. Other hyperparameters were left as default.
        \item Hyperparameter tuning was performed on the values of $C$, but we observed no significant difference in the ability of providers to best-respond.
        \item The model was implemented using the \feature{LinearSVC} class from the \feature{sklearn} package.
        \item sample weights from our method were passed using the \emph{sample weight} parameter of the \emph{fit} method.
    \end{itemize}
    \item \textbf{XGboost:}
    \begin{itemize}
        \item Hyperparameters:
    \begin{itemize}
        \item Learning rate: $0.3$
        \item Max tree depth: $6$
        \item all other hyperparameters remained the default, in particular performing row and column subsampling of 1.
    \end{itemize}
        \item the loss metric used for boosting is \emph{log-loss}
        \item the model was implemented using the \feature{XGBClassifier} class from the \feature{xgboost} package
        \item We note that some basic hyperparameter tuning was performed using a grid search, but default values yielded satisfactory results.
    \end{itemize}
    \item \textbf{Random Forest:}
    \begin{itemize}
        \item Hyperparameters:
    \begin{itemize}
        \item Number of estimators: $10$
        \item Max tree depth: the default, meaning all nodes were expanded until all of the leaves are pure or contain a single sample.
        \item all other hyperparameters remained the default.
    \end{itemize}
        \item the loss metric used for boosting is \emph{log-loss}
        \item the model was implemented using the \feature{RandomForestClassifier} class from the \feature{sklearn} package
        \item Hyperparameters were minimally tuned, and the default values were primarily used.
    \end{itemize}
    
\end{enumerate}

\subsection{General implementation details}

\label{appx:exp_details:general}
\paragraph{Test and validation set.}
For all experiments, the dataset was split into training, validation, and test sets. The test set comprised 20\% of the data and was held out for final performance evaluation. The validation set, also comprising 20\% of the data, was used for hyperparameter tuning when applicable. In cases where no hyperparameter tuning was performed, the validation set was not utilized, and so only the training and test sets were used.
\paragraph{Experiment Splits.}
To ensure integrity and mitigate the effect of random variations in the data, each experiment was conducted over 10 random splits of the dataset. For each split, the data was shuffled and divided into training, validation, and test sets according to the above proportions. The reported results in the following sections include standard errors calculated across these 10 splits, providing an estimate of variability in the model performance.
    
\paragraph{Code.} All of our code is implemented in Python. All of our experiments are reproducible and attached as supplementary material.
\paragraph{Hardware.} All experiments were run in the PyCharm IDE on a single Macbook Pro laptop, with 16GB of RAM, and M2 processor, and with no GPU support. However, the experiments to create the table metrics  were cumbersome on the IDE, and so the PyCharm heap size was raised to 8K MegaBytes in order to enlarge the stack.
The total runtime for all the results takes roughly 12 minutes.

\section{Additional experimental results} \label{appx:add_exps}

\subsection{Main results for additional settings} \label{appx:tables}

In this appendix we showcase additional insights from our main results when tested on additional model classes.

\paragraph{XGboost}
\begin{table*}[h!]
  \centering
  \caption{XGboost performance}
\resizebox{\textwidth}{!}{
\begin{tabular}{rrrllllrllllrllll}
  &   &   & \multicolumn{4}{c}{\textbf{Adult}} &   & \multicolumn{4}{c}{\textbf{COMPAS-arrest}} &   & \multicolumn{4}{c}{\textbf{COMPAS-violent}} \\
\cmidrule{4-7}\cmidrule{9-12}\cmidrule{14-17}  &   &   & \multicolumn{1}{c}{$\min \ms$} & \multicolumn{1}{c}{$\max \ms$} & \multicolumn{1}{c}{HHI} & \multicolumn{1}{c}{welfare} &   & \multicolumn{1}{c}{$\min \ms$} & \multicolumn{1}{c}{$\max \ms$} & \multicolumn{1}{c}{HHI} & \multicolumn{1}{c}{welfare} &   & \multicolumn{1}{c}{$\min \ms$} & \multicolumn{1}{c}{$\max \ms$} & \multicolumn{1}{c}{HHI} & \multicolumn{1}{c}{welfare} \\
\cmidrule{4-7}\cmidrule{9-12}\cmidrule{14-17}\multirow{5}[2]{*}{\begin{sideways}\# providers\end{sideways}}         & 2     &       & +1.1\% & +2.1\% & +3.2\% & +1.6\% &       & +20.5\% & +39.0\% & +69.3\% & +29.7\% &       & +26.7\% & +54.6\% & +100.0\% & +40.7\% \\
          & 3     &       & +1.7\% & +3.3\% & +5.2\% & +2.6\% &       & +34.5\% & +57.8\% & +105.8\% & +43.0\% &       & +35.7\% & +68.9\% & +120.5\% & +47.7\% \\
         & 4     &       & +2.4\% & +3.8\% & +6.2\% & +3.1\% &       & +35.3\% & +66.3\% & +116.1\% & +46.4\% &       & +36.3\% & +66.1\% & +125.8\% & +49.8\% \\
          & 5     &       & +2.4\% & +4.6\% & +7.2\% & +3.5\% &       & +37.2\% & +78.0\% & +122.0\% & +48.1\% &       & +43.3\% & +68.6\% & +128.6\% & +50.8\% \\
       & 6     &       & +2.0\% & +5.0\% & +7.2\% & +3.5\% &       & +40.2\% & +73.1\% & +124.0\% & +49.1\% &       & +43.4\% & +69.3\% & +130.8\% & +51.6\% \\

\cmidrule{4-7}\cmidrule{9-12}\cmidrule{14-17}\end{tabular}%
}
\label{tbl:app_xgb}%
\end{table*}%


\Cref{tbl:app_xgb} shows the learning performance of the service providers when using XGBoost as the model class for training and inference.
The details of the Xgboost implementation and hyperparameters can be found in \Cref{appx:exp_details:models}.
A comparison of interest is the general improvements of the players relative to the Linear model class. We can notice that the welfare improvement, which is equal to the total market share of all players, is significantly lower for both the Adult and COMPAS-ARREST dataset.
This does not indicate that the total welfare is lower, but rather that due to the increased expressiveness of the XGboost model, the starting welfare began at a higher value.

Another point of interest is how the HHI (the measure of imbalance in market share between the providers) persists across model classes, regardless of expressivity. This further highlights the importance of taking into account the market dynamics and the order of play when competing with other providers.

\paragraph{Random Forest}
\begin{table*}[h!]
  \centering
  \caption{Random forest performance}
\resizebox{\textwidth}{!}{
\begin{tabular}{rrrllllrllllrllll}
  &   &   & \multicolumn{4}{c}{\textbf{Adult}} &   & \multicolumn{4}{c}{\textbf{COMPAS-arrest}} &   & \multicolumn{4}{c}{\textbf{COMPAS-violent}} \\
\cmidrule{4-7}\cmidrule{9-12}\cmidrule{14-17}  &   &   & \multicolumn{1}{c}{$\min \ms$} & \multicolumn{1}{c}{$\max \ms$} & \multicolumn{1}{c}{HHI} & \multicolumn{1}{c}{welfare} &   & \multicolumn{1}{c}{$\min \ms$} & \multicolumn{1}{c}{$\max \ms$} & \multicolumn{1}{c}{HHI} & \multicolumn{1}{c}{welfare} &   & \multicolumn{1}{c}{$\min \ms$} & \multicolumn{1}{c}{$\max \ms$} & \multicolumn{1}{c}{HHI} & \multicolumn{1}{c}{welfare} \\
\cmidrule{4-7}\cmidrule{9-12}\cmidrule{14-17}\multirow{5}[2]{*}{\begin{sideways}\# providers\end{sideways}}          & 2     &       & +2.9\% & +3.9\% & +6.9\% & +3.4\% &       & +20.6\% & +36.1\% & +65.3\% & +28.3\% &       & +25.7\% & +53.0\% & +96.1\% & +39.3\% \\
        & 3     &       & +4.1\% & +5.8\% & +10.1\% & +4.9\% &       & +30.9\% & +58.7\% & +98.9\% & +40.3\% &       & +34.3\% & +71.8\% & +118.9\% & +46.9\% \\
         & 4     &       & +4.5\% & +6.5\% & +11.3\% & +5.5\% &       & +33.0\% & +70.5\% & +111.8\% & +44.7\% &       & +33.5\% & +70.6\% & +124.5\% & +49.2\% \\
         & 5     &       & +4.5\% & +7.5\% & +12.5\% & +6.1\% &       & +34.2\% & +81.0\% & +117.0\% & +46.0\% &       & +41.7\% & +75.1\% & +128.3\% & +50.5\% \\
        & 6     &       & +5.2\% & +8.0\% & +13.5\% & +6.5\% &       & +37.4\% & +78.2\% & +119.5\% & +47.3\% &       & +38.6\% & +75.7\% & +130.3\% & +51.2\% \\

\cmidrule{4-7}\cmidrule{9-12}\cmidrule{14-17}\end{tabular}%
}
\label{tbl:app_rf}%
\end{table*}%



In a similar manner, \Cref{tbl:app_rf} presents the results on competition where the providers are employing a Random Forest model, whose details can be found in \cref{appx:exp_details:models}.
We can observe that the results resemble those of the XGboost model class, which can be expected due to the similarity in nature of all Decision Tree models. 

\extended{
\paragraph{Comparison across model classes}
\todo{OPTIONAL}
\todo{make a table for arrest/violent/adult where we compare the  }
}

\paragraph{Standard errors of experiments}
As mentioned in \cref{appx:exp_details}, each experiment was run over 10 train-test splits and the metric values were averaged out over those splits. 
\Cref{tbl:app_stderrs} portrays, for each metric, the maximum variation of the standard error among the three model classes analyzed.
We note that all of the errors are below 5\%, and most of the errors are well below 3\%.

\begin{table*}[h!]
  \centering
  \caption{Max standard errors of experiments across all model classes}
\resizebox{\textwidth}{!}{
\begin{tabular}{rrrllllrllllrllll}
  &   &   & \multicolumn{4}{c}{\textbf{Adult}} &   & \multicolumn{4}{c}{\textbf{COMPAS-arrest}} &   & \multicolumn{4}{c}{\textbf{COMPAS-violent}} \\
\cmidrule{4-7}\cmidrule{9-12}\cmidrule{14-17}  &   &   & \multicolumn{1}{c}{$\min \ms$} & \multicolumn{1}{c}{$\max \ms$} & \multicolumn{1}{c}{HHI} & \multicolumn{1}{c}{welfare} &   & \multicolumn{1}{c}{$\min \ms$} & \multicolumn{1}{c}{$\max \ms$} & \multicolumn{1}{c}{HHI} & \multicolumn{1}{c}{welfare} &   & \multicolumn{1}{c}{$\min \ms$} & \multicolumn{1}{c}{$\max \ms$} & \multicolumn{1}{c}{HHI} & \multicolumn{1}{c}{welfare} \\
\cmidrule{4-7}\cmidrule{9-12}\cmidrule{14-17}\multirow{5}[2]{*}{\begin{sideways}\# providers\end{sideways}}          & 2     &       & ±1.9\%& ±4.4\%&	±2.1\%&	±2.6\%& &	±1.0\%&	±2.6\%&	±1.2\%&	±1.3\%& &	±1.0\%&	±2.9\%&	±1.4\%&	±1.1\% \\
        & 3     &       &±0.8\%&	±1.9\%&	±1.0\%&	±2.0\%& &	±1.2\%&	±3.5\%&	±0.8\%&	±2.7\%&	&±0.8\%&	±2.5\%&	±0.8\%&	±2.3\%\\
         & 4     &       & ±0.7\%&	±2.0\%&	±2.1\%&	±4.2\%& &	±0.9\%&	±2.9\%&	±0.7\%&	±3.2\%& &	±0.7\%&	±2.2\%&	±1.2\%&	±3.2\% \\
         & 5     &       & ±0.9\%&	±2.2\%&	±1.1\%&	±3.2\%& &	±1.0\%&	±3.5\%&	±0.7\%&	±4.4\%& &	±0.7\%&	±2.2\%&	±0.8\%&	±3.6\% \\
        & 6     &       & ±0.8\%&	±1.9\%&	±2.4\%&	±3.4\%& &	±1.0\%&	±3.2\%&	±0.8\%&	±4.4\%& &	±0.6\%&	±2.1\%&	±1.0\%&	±2.7\% \\

\cmidrule{4-7}\cmidrule{9-12}\cmidrule{14-17}\end{tabular}%
}
\label{tbl:app_stderrs}%
\end{table*}%


\subsection{Competition Dynamics}
\begin{figure}[h!]
    \centering
\includegraphics[width=0.8\textwidth]{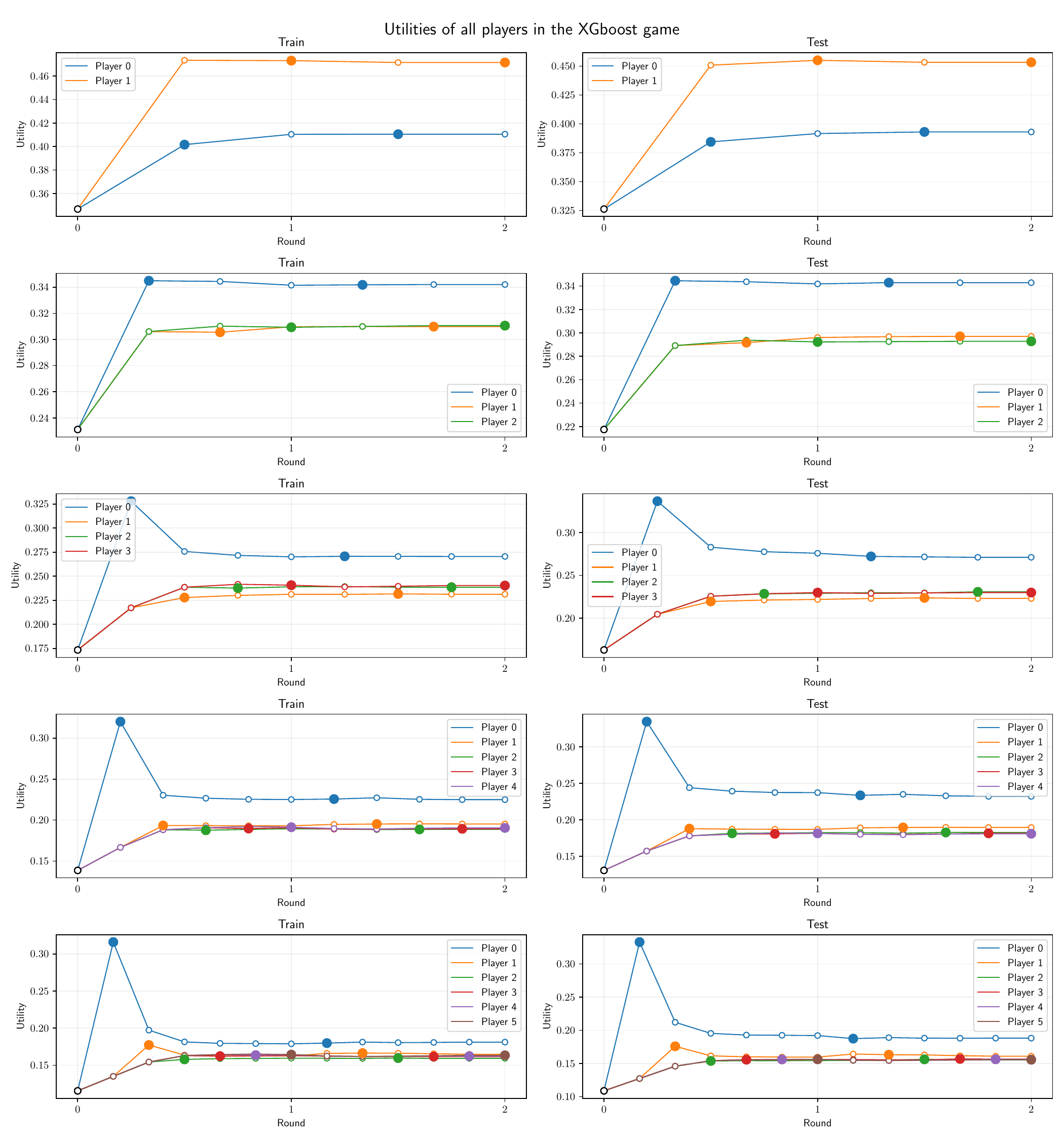}
    \caption{Competition Dynamics with XGboost models on the COMPAS-Arrest dataset. Utilities of individual players are shown for Train (Left) and Test (Right), in markets involving 2,3,4,5,and 6 players (Top to Bottom)}. 
    \label{fig:cmpt_utils_xgb}
\end{figure}
\paragraph{Market Share}
In order to provide a more comprehensive analysis of how the market shares of the providers move through the best responses in the competition,  \Cref{fig:cmpt_utils_xgb} shows, for each number of players competing for market share, the shifts in market share of each provider based on their positions in the game (order of play). 

We can observe a few key points:
\begin{enumerate}
    \item \textbf{Market Stability.} Across all variations of the number of players, the market shares of the player converge almost immediately to their final respective values. This convergence occurs even before every player offered a single best-response, i.e, by the end of Round 1. 
    \item \textbf{Order of play.} In line with what is expressed in \cref{sec:exp:real}, The order of play when offering a best-response is important, and varies as a function of the number of players.
    For $n=2$ providers,  playing second can offer a significant competitive advantage.
    When shifting to markets with $n\ge 3$ providers, however, we observe a significant advantage to the 1st mover, as discussed in \cref{sec:exp:real}.
    \item \textbf{General market share trend.}
    Regardless of the order of play, and as alluded to in \cref{tbl:app_xgb}, the market share rises for all players, which supports the empirical claim that providers that are competing are in a way \emph{collaborating} to figure out how optimally divide the market.
    \item \textbf{Generalization to an unseen test set.}
    The performance on the unseen test set closely mirrors that observed during training, highlighting the robustness of the competition method. The similarity in performance demonstrates that the model can calculate a best-response without overfitting, and underscores the ability of the framework to maintain accuracy in line with that expected from traditional ML methods.    
    
\end{enumerate} 

\subsection{Welfare} \label{appx:welfare}
\paragraph{Welfare behaviors in additional settings.}
\cref{fig:sw_across_models} shows  the social welfares across the model classes described in \cref{appx:exp_details:models}, namely \feature{LinearSVC, XGBoost}, and \feature{RandomForest}. The test welfares are shown for the COMPAS-arrest dataset, and are portrayed for each model class and each game varying the nubmer of players.
We can see that, as in \cref{sec:exp:real}, the welfare gets mazimized very early for the \feature{Linear} model class, but hits a non-maximal plateau for the Decision Trees. This is another example of the non-monotonic nature of social welfare: in many cases providers having less expressiveness in their models will in fact benefit the consumers.
\begin{figure}[t!]
    \centering
    \includegraphics[width=\linewidth]{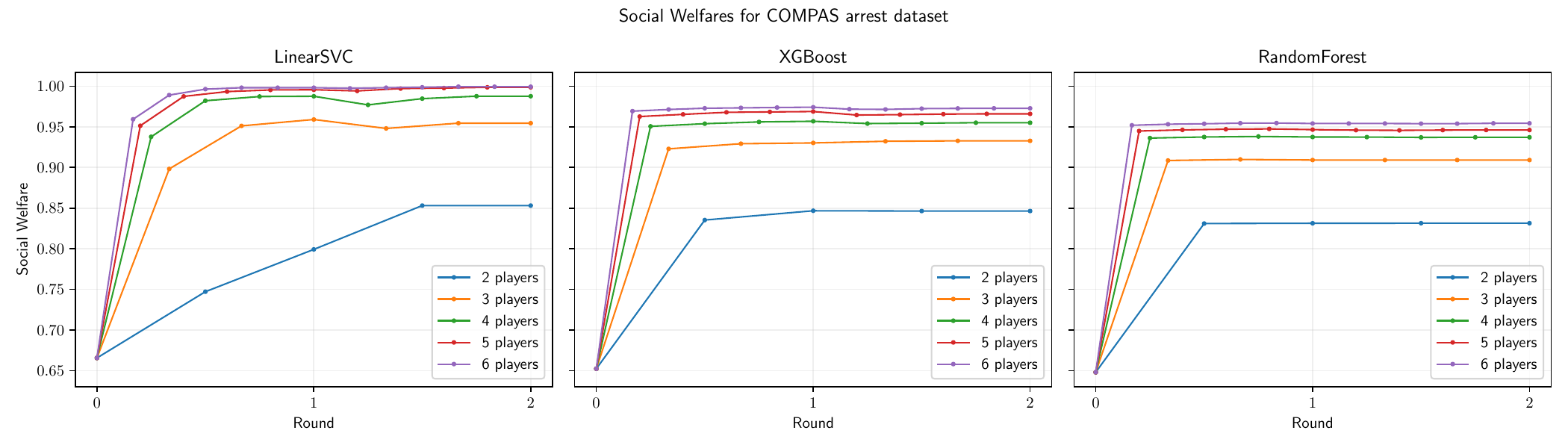}
    \caption{Social Welfares across model classes on the test set for the COMPAS arrest dataset. Each plot calculates the social welfare at each timestep and for each experiment that varies the number of players.}
    \label{fig:sw_across_models}
\end{figure}
\paragraph{Welfare across asymmetry in data.}
The social welfare trends across different levels of data representations can also behave in a surprising manner, as shown in \cref{sec:exp:real} and explained in great detail by \cite{jagadeesan2024improved}. 
As a portrayal of this phenomenon across different competition settings, \cref{fig:sw_across_number_features} plots the social welfares across experiments of 2,3, and 4 players, respectively.
We can observe, that while no as blatant as with $n=2$ providers, the general trend across all player formats is that the social welfare increases as an inverse proportion to the richness of the data representations.

\begin{figure}[b!]
    \centering
    \includegraphics[width=\linewidth]{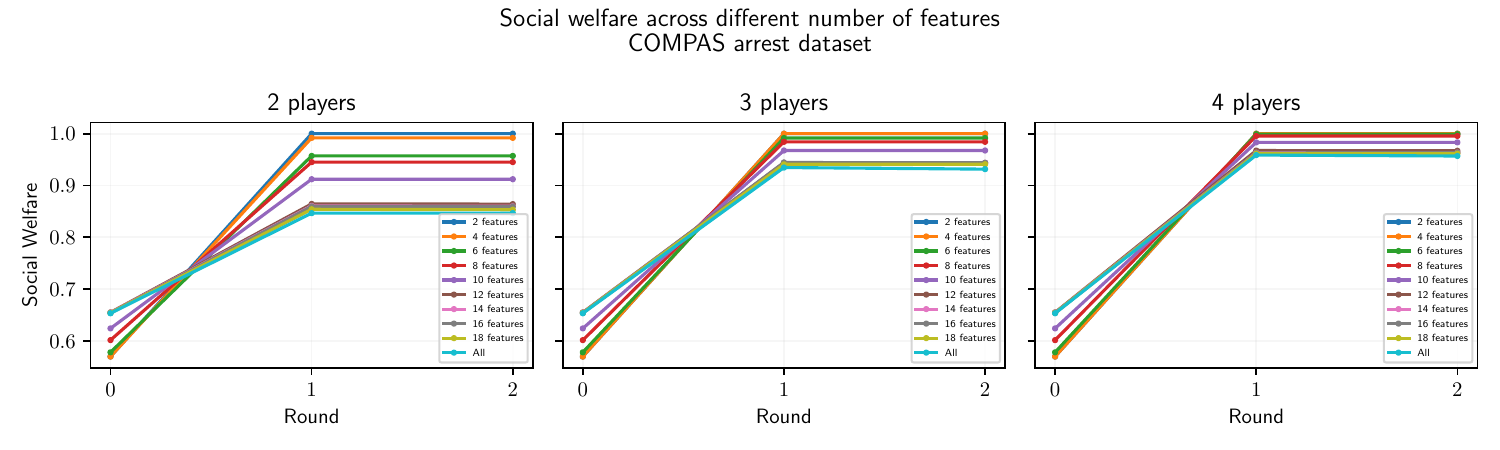}
    \caption{Comparison of social welfares on the COMPAS arrest test set when varying the number of features available to use for model training (and inference). }
    \label{fig:sw_across_number_features}
\end{figure}

\subsection{Synthetic Data}
\label{app:add_exps:synth}

In \cref{sec:exp:synth}, we showed an example of the chicken dynamic between asymmetrical class-conditioned gaussians. Additionally, we performed an overlap analysis between symmetric gaussians, where for each measure of distance between the means, or overlap, we calculated the best-response thresholds and performance metrics.
\Cref{fig:gauss_asymm_tipping_pt} (Left) shows the above analysis on asymmetric gaussians, namely $\sigma_{-1} = 2, \sigma_{+1} =1$.
As in \cref{sec:exp:synth}, at each overlap the thresholds of the players are initialized at $h_1^0 = h_2^0 = h_{opt}$, where $h_{opt}$ is the naively optimal classifier that maximizes accuracy on the distributions.
Here too, and as is guaranteed by \cref{thm:generalized_br_1d}, the best-response dynamics converge after just one round. 
In this case of asymmetry between the class-conditioned gaussians, we notice a distinctly different behavior.
while $h_1$ remains in the same proximity to $h_{opt}$ as in the symmetric case, threshold $h_2$ has a \quot{tipping point}, where it suddenly jumps to the far end of both distributions.
From the accuracy graph we can also see a sudden drop in accuracy coming fromm  model $h_2$. In regards to the market share, however, the provider that gets a spike in market share is in fact the one who played $h_1$ and remains close to the optimal, while the market share of $h_2$ simply shows a gradual increase, with no reference to a tipping point.

This phenomenon is understandable when we look at \cref{thm:generalized_br_1d}, that states that the best-response may be at the far end of the distributions. This occurs when either of the values $g^{-1}(1/2), g^-1(2)$ stops existing, where $g(x) = \frac{f_1(x)}{f_0(x)}$ is the MLR function. In these cases, the left or right threshold (depending on which value of $g^{-1}$ disappears) will continue to gain by moving to the far end of the interval. 
This is precisely what is shown in \cref{fig:gauss_asymm_tipping_pt} (Right); at a certain overlap ($\sim-1$), there is no threshold $h$ for which $g(h)=2$, and so the gain in discrepancy for $h_2$ will continue to outweigh the loss in accuracy, and $h_2$ ends up at the far right end. 
The market share of $h_1$ is then suddenly increased, since while its accuracy remains the same, the discrepancy from $h_2$ is now significantly greater. 
\begin{figure}[t!]
    \centering
    \includegraphics[width=\linewidth]{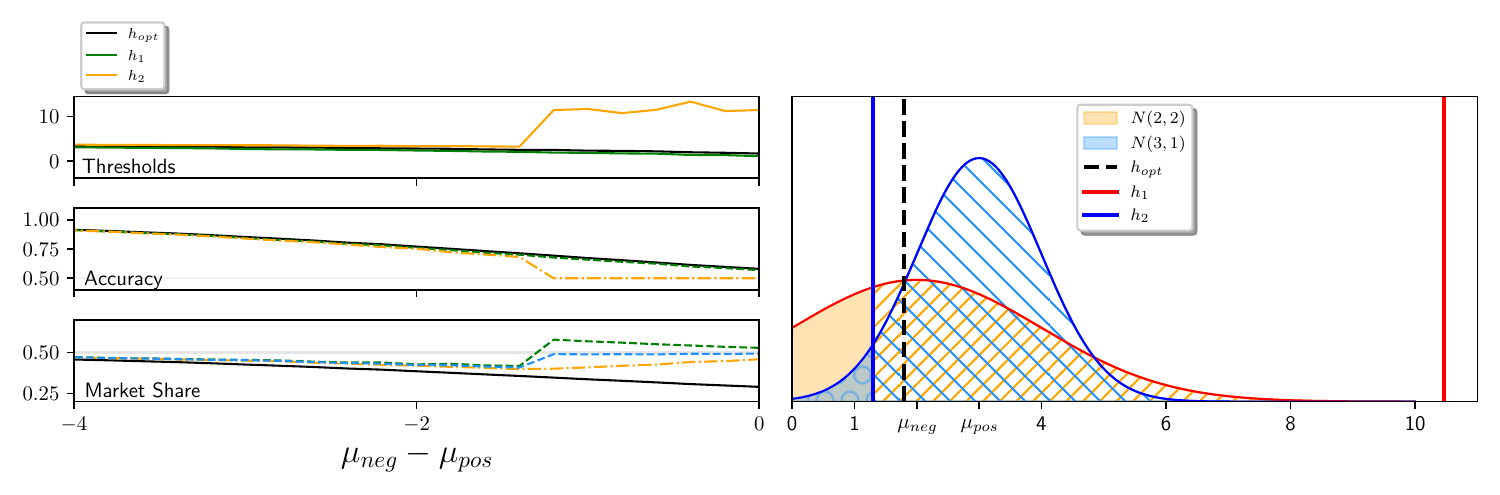}
    \caption{Dynamics of the best-response game on threshold classsifiers with asymmetric gaussians. The metrics at each level of proximity between the gaussians are shown (Left). The thresholds at the \quot{tipping point} (Right) can be seen to be very far apart. The hatches represent the sectors of exclusivity. }
    \label{fig:gauss_asymm_tipping_pt}
\end{figure}
\extended{
\paragraph{Four squares example} \todo{... OPTIONAL}
}
\subsection{Asymmetrical power between players}
Another interesting question we can ask from the perspective of the learners/providers is, if a provider were to invest cost and effort to gain better data, how would this help them in competition? In \naive 
\ settings, i.e. single-provider markets, the answer to this is straightforward: more data = better accuracy. In accuracy markets, however, the specialization needs to be considered as well, and as we have seen (\cref{sec:exp:real}), the behavior of the market when altering the data quality can be counter-intuitive.
To measure the gain to be had when improving data, we ran experiments where one of the providers has access to more features than its counterpart/s.
For each of 2 possible move positions (1st or 2nd), two metrics were considered:

1) The provider's gain in market share over himself if he weren't to improve his data (i.e. the data is identical for all parties), which measures the provider's marginal gain from improving  data

2) The provider's gain in $\Delta \mu$ from the next-best provider versus the setting where he didn't improve his data, which measures the impact of the data investment on the concentration of the market as a whole.

\begin{table}[t]
  \centering
  \caption{Asymmetrical power between players. Results are shown for markets with 2,3,and 4 providers, respectively. Rows are shown for each choice of number of features for the provider with better data,and the columns for the number of features of the other providers. Table results are shown for using XGboost trees on the \feature{compas-arrest} dataset, that contains 21 features in total.}
  \resizebox{\textwidth}{!}{
    \begin{tabular}{ccrcccccccccccc}
          &       &   & \multicolumn{12}{c}{\Large{\textbf{\# features of the 
          worse data}}} \\
          \\
          &       &       & \multicolumn{6}{l}{\textbf{$\Delta\mu$ from regular setting}}        &   
          \multicolumn{6}{l}{\textbf{$\Delta\mu$ from next best provider}} \\
    \cmidrule(lr){4-9}\cmidrule(l){10-15}
    \multicolumn{1}{l}{\textbf{ \# providers}} & \multicolumn{1}{l}{\textbf{move position}} & \multicolumn{1}{l}{\textbf{\# features: better data}} & \textbf{3} & \textbf{6} & \textbf{9} & \textbf{12} & \textbf{15} & \textbf{18} & \textbf{3} & \textbf{6} & \textbf{9} & \textbf{12} & \textbf{15} & \textbf{18} \\
    \midrule
    \textbf{2} & \multicolumn{1}{l}{\textcolor[rgb]{ .502,  .502,  .502}{\textbf{first}}} & 6     & \cellcolor[rgb]{ .953,  .965,  .988}3.20\% &       &       &       &       &       & \cellcolor[rgb]{ .976,  .686,  .694}-14.37\% &       &       &       &       &  \\
    
          & \textcolor[rgb]{ .502,  .502,  .502}{} & 9     & \cellcolor[rgb]{ .89,  .922,  .969}8.96\% & \cellcolor[rgb]{ .929,  .949,  .98}5.34\% &       &       &       &       & \cellcolor[rgb]{ .984,  .961,  .973}-1.17\% & \cellcolor[rgb]{ .984,  .902,  .914}-3.97\% &       &       &       &  \\
          & \textcolor[rgb]{ .502,  .502,  .502}{} & 12    & \cellcolor[rgb]{ .847,  .89,  .953}12.82\% & \cellcolor[rgb]{ .886,  .918,  .965}9.41\% & \cellcolor[rgb]{ .937,  .953,  .984}4.76\% &       &       &       & \cellcolor[rgb]{ .835,  .878,  .945}14.04\% & \cellcolor[rgb]{ .855,  .894,  .953}12.21\% & \cellcolor[rgb]{ .98,  .98,  .996}1.01\% &       &       &  \\
          & \textcolor[rgb]{ .502,  .502,  .502}{} & 15    & \cellcolor[rgb]{ .851,  .89,  .953}12.66\% & \cellcolor[rgb]{ .89,  .918,  .965}9.15\% & \cellcolor[rgb]{ .933,  .953,  .984}4.99\% & \cellcolor[rgb]{ .984,  .988,  1}0.48\% &       &       & \cellcolor[rgb]{ .827,  .878,  .945}14.48\% & \cellcolor[rgb]{ .851,  .894,  .953}12.41\% & \cellcolor[rgb]{ .965,  .973,  .992}2.17\% & \cellcolor[rgb]{ .98,  .757,  .765}-10.96\% &       &  \\
          & \textcolor[rgb]{ .502,  .502,  .502}{} & 18    & \cellcolor[rgb]{ .859,  .898,  .957}11.85\% & \cellcolor[rgb]{ .894,  .922,  .969}8.76\% & \cellcolor[rgb]{ .929,  .949,  .98}5.43\% & \cellcolor[rgb]{ .984,  .984,  1}0.60\% & \cellcolor[rgb]{ .984,  .98,  .992}-0.33\% &       & \cellcolor[rgb]{ .839,  .886,  .949}13.46\% & \cellcolor[rgb]{ .859,  .898,  .957}11.86\% & \cellcolor[rgb]{ .949,  .961,  .988}3.69\% & \cellcolor[rgb]{ .98,  .788,  .8}-9.43\% & \cellcolor[rgb]{ .98,  .765,  .773}-10.63\% &  \\
          & \textcolor[rgb]{ .502,  .502,  .502}{} & 21    & \cellcolor[rgb]{ .878,  .91,  .961}10.10\% & \cellcolor[rgb]{ .914,  .937,  .976}7.01\% & \cellcolor[rgb]{ .961,  .969,  .992}2.53\% & \cellcolor[rgb]{ .984,  .914,  .925}-3.44\% & \cellcolor[rgb]{ .984,  .914,  .925}-3.42\% & \cellcolor[rgb]{ .984,  .922,  .933}-3.11\% & \cellcolor[rgb]{ .878,  .914,  .965}9.98\% & \cellcolor[rgb]{ .902,  .929,  .973}8.02\% & \cellcolor[rgb]{ .984,  .961,  .973}-1.16\% & \cellcolor[rgb]{ .976,  .675,  .682}-14.89\% & \cellcolor[rgb]{ .976,  .69,  .698}-14.12\% & \cellcolor[rgb]{ .976,  .694,  .702}-13.96\% \\
    \cmidrule(r){2-3}
          & \multicolumn{1}{l}{\textcolor[rgb]{ .502,  .502,  .502}{\textbf{second}}} & 6     & \cellcolor[rgb]{ .984,  .882,  .894}-4.98\% &       &       &       &       &       & \cellcolor[rgb]{ .765,  .831,  .922}20.32\% &       &       &       &       &  \\
          & \textcolor[rgb]{ .502,  .502,  .502}{} & 9     & \cellcolor[rgb]{ .98,  .8,  .812}-8.78\% & \cellcolor[rgb]{ .984,  .894,  .906}-4.38\% &       &       &       &       & \cellcolor[rgb]{ .796,  .851,  .933}17.55\% & \cellcolor[rgb]{ .835,  .882,  .949}14.03\% &       &       &       &  \\
          & \textcolor[rgb]{ .502,  .502,  .502}{} & 12    & \cellcolor[rgb]{ .98,  .706,  .714}-13.36\% & \cellcolor[rgb]{ .984,  .875,  .886}-5.26\% & \cellcolor[rgb]{ .98,  .812,  .82}-8.40\% &       &       &       & \cellcolor[rgb]{ .812,  .867,  .941}15.92\% & \cellcolor[rgb]{ .8,  .855,  .933}17.19\% & \cellcolor[rgb]{ .933,  .949,  .98}5.15\% &       &       &  \\
          & \textcolor[rgb]{ .502,  .502,  .502}{} & 15    & \cellcolor[rgb]{ .976,  .694,  .702}-13.95\% & \cellcolor[rgb]{ .984,  .867,  .878}-5.62\% & \cellcolor[rgb]{ .98,  .776,  .788}-9.91\% & \cellcolor[rgb]{ .984,  .98,  .992}-0.33\% &       &       & \cellcolor[rgb]{ .812,  .867,  .941}15.91\% & \cellcolor[rgb]{ .792,  .851,  .933}17.74\% & \cellcolor[rgb]{ .945,  .961,  .988}3.98\% & \cellcolor[rgb]{ .851,  .89,  .953}12.64\% &       &  \\
          & \textcolor[rgb]{ .502,  .502,  .502}{} & 18    & \cellcolor[rgb]{ .976,  .682,  .69}-14.55\% & \cellcolor[rgb]{ .984,  .855,  .867}-6.17\% & \cellcolor[rgb]{ .98,  .769,  .78}-10.28\% & \cellcolor[rgb]{ .984,  .957,  .969}-1.32\% & \cellcolor[rgb]{ .984,  .969,  .98}-0.81\% &       & \cellcolor[rgb]{ .824,  .875,  .945}14.88\% & \cellcolor[rgb]{ .8,  .855,  .933}17.09\% & \cellcolor[rgb]{ .953,  .961,  .988}3.53\% & \cellcolor[rgb]{ .867,  .902,  .957}11.06\% & \cellcolor[rgb]{ .875,  .91,  .961}10.35\% &  \\
          & \textcolor[rgb]{ .502,  .502,  .502}{} & 21    & \cellcolor[rgb]{ .976,  .639,  .651}-16.46\% & \cellcolor[rgb]{ .98,  .812,  .82}-8.38\% & \cellcolor[rgb]{ .98,  .725,  .737}-12.35\% & \cellcolor[rgb]{ .984,  .918,  .929}-3.29\% & \cellcolor[rgb]{ .984,  .925,  .933}-2.97\% & \cellcolor[rgb]{ .984,  .937,  .949}-2.27\% & \cellcolor[rgb]{ .878,  .914,  .965}9.92\% & \cellcolor[rgb]{ .843,  .886,  .949}13.09\% & \cellcolor[rgb]{ .984,  .969,  .98}-0.89\% & \cellcolor[rgb]{ .925,  .945,  .98}5.76\% & \cellcolor[rgb]{ .925,  .945,  .98}5.74\% & \cellcolor[rgb]{ .91,  .933,  .973}7.20\% \\
    \cmidrule(r){1-3}
    \textbf{3} & \multicolumn{1}{l}{\textcolor[rgb]{ .502,  .502,  .502}{\textbf{first}}} & 6     & \cellcolor[rgb]{ .984,  .976,  .988}-0.48\% &       &       &       &       &       & \cellcolor[rgb]{ .412,  .584,  .8}51.86\% &       &       &       &       &  \\
          & \textcolor[rgb]{ .502,  .502,  .502}{} & 9     & \cellcolor[rgb]{ .988,  .988,  1}0.25\% & \cellcolor[rgb]{ .969,  .976,  .996}1.93\% &       &       &       &       & \cellcolor[rgb]{ .353,  .541,  .776}57.14\% & \cellcolor[rgb]{ .38,  .561,  .788}55.02\% &       &       &       &  \\
          & \textcolor[rgb]{ .502,  .502,  .502}{} & 12    & \cellcolor[rgb]{ .984,  .843,  .855}-6.79\% & \cellcolor[rgb]{ .984,  .925,  .937}-2.88\% & \cellcolor[rgb]{ .984,  .953,  .965}-1.58\% &       &       &       & \cellcolor[rgb]{ .545,  .678,  .847}40.07\% & \cellcolor[rgb]{ .475,  .627,  .82}46.39\% & \cellcolor[rgb]{ .655,  .757,  .886}30.02\% &       &       &  \\
          & \textcolor[rgb]{ .502,  .502,  .502}{} & 15    & \cellcolor[rgb]{ .984,  .89,  .898}-4.65\% & \cellcolor[rgb]{ .984,  .906,  .918}-3.75\% & \cellcolor[rgb]{ .984,  .957,  .969}-1.33\% & \cellcolor[rgb]{ .984,  .976,  .988}-0.41\% &       &       & \cellcolor[rgb]{ .486,  .635,  .824}45.18\% & \cellcolor[rgb]{ .498,  .643,  .827}44.24\% & \cellcolor[rgb]{ .635,  .741,  .878}31.88\% & \cellcolor[rgb]{ .788,  .847,  .929}18.26\% &       &  \\
          & \textcolor[rgb]{ .502,  .502,  .502}{} & 18    & \cellcolor[rgb]{ .984,  .922,  .933}-3.09\% & \cellcolor[rgb]{ .984,  .933,  .945}-2.55\% & \cellcolor[rgb]{ .984,  .953,  .965}-1.53\% & \cellcolor[rgb]{ .984,  .965,  .976}-0.99\% & \cellcolor[rgb]{ .984,  .973,  .984}-0.65\% &       & \cellcolor[rgb]{ .42,  .588,  .8}51.18\% & \cellcolor[rgb]{ .463,  .62,  .816}47.61\% & \cellcolor[rgb]{ .635,  .741,  .878}31.82\% & \cellcolor[rgb]{ .8,  .855,  .933}17.21\% & \cellcolor[rgb]{ .78,  .843,  .929}18.85\% &  \\
    
          & \textcolor[rgb]{ .502,  .502,  .502}{} & 21    & \cellcolor[rgb]{ .98,  .769,  .776}-10.45\% & \cellcolor[rgb]{ .98,  .835,  .847}-7.13\% & \cellcolor[rgb]{ .984,  .922,  .933}-3.11\% & \cellcolor[rgb]{ .984,  .929,  .941}-2.65\% & \cellcolor[rgb]{ .984,  .933,  .945}-2.56\% & \cellcolor[rgb]{ .984,  .957,  .969}-1.44\% & \cellcolor[rgb]{ .655,  .757,  .886}30.02\% & \cellcolor[rgb]{ .58,  .702,  .859}37.01\% & \cellcolor[rgb]{ .655,  .753,  .882}30.14\% & \cellcolor[rgb]{ .808,  .863,  .937}16.38\% & \cellcolor[rgb]{ .796,  .855,  .933}17.40\% & \cellcolor[rgb]{ .796,  .855,  .933}17.42\% \\
    \cmidrule(r){2-3}
          & \multicolumn{1}{l}{\textcolor[rgb]{ .502,  .502,  .502}{\textbf{second}}} & 6     & \cellcolor[rgb]{ .973,  .98,  .996}1.43\% &       &       &       &       &       & \cellcolor[rgb]{ .973,  .412,  .42}-27.44\% &       &       &       &       &  \\
          & \textcolor[rgb]{ .502,  .502,  .502}{} & 9     & \cellcolor[rgb]{ .882,  .914,  .965}9.68\% & \cellcolor[rgb]{ .882,  .914,  .965}9.78\% &       &       &       &       & \cellcolor[rgb]{ .98,  .745,  .753}-11.53\% & \cellcolor[rgb]{ .98,  .8,  .812}-8.80\% &       &       &       &  \\
          & \textcolor[rgb]{ .502,  .502,  .502}{} & 12    & \cellcolor[rgb]{ .82,  .871,  .941}15.39\% & \cellcolor[rgb]{ .851,  .89,  .953}12.55\% & \cellcolor[rgb]{ .925,  .945,  .98}5.81\% &       &       &       & \cellcolor[rgb]{ .98,  .839,  .847}-7.08\% & \cellcolor[rgb]{ .984,  .859,  .871}-6.05\% & \cellcolor[rgb]{ .98,  .831,  .843}-7.29\% &       &       &  \\
          & \textcolor[rgb]{ .502,  .502,  .502}{} & 15    & \cellcolor[rgb]{ .824,  .871,  .941}15.10\% & \cellcolor[rgb]{ .855,  .894,  .953}12.33\% & \cellcolor[rgb]{ .929,  .949,  .98}5.51\% & \cellcolor[rgb]{ .984,  .988,  1}0.47\% &       &       & \cellcolor[rgb]{ .98,  .812,  .824}-8.34\% & \cellcolor[rgb]{ .98,  .827,  .839}-7.49\% & \cellcolor[rgb]{ .98,  .8,  .812}-8.85\% & \cellcolor[rgb]{ .976,  .663,  .671}-15.49\% &       &  \\
          & \textcolor[rgb]{ .502,  .502,  .502}{} & 18    & \cellcolor[rgb]{ .824,  .875,  .945}14.97\% & \cellcolor[rgb]{ .855,  .894,  .953}12.24\% & \cellcolor[rgb]{ .929,  .949,  .98}5.40\% & \cellcolor[rgb]{ .98,  .984,  1}0.84\% & \cellcolor[rgb]{ .984,  .98,  .992}-0.25\% &       & \cellcolor[rgb]{ .98,  .812,  .82}-8.39\% & \cellcolor[rgb]{ .98,  .827,  .839}-7.48\% & \cellcolor[rgb]{ .98,  .796,  .804}-9.09\% & \cellcolor[rgb]{ .976,  .671,  .682}-14.97\% & \cellcolor[rgb]{ .976,  .635,  .647}-16.66\% &  \\
          & \textcolor[rgb]{ .502,  .502,  .502}{} & 21    & \cellcolor[rgb]{ .827,  .875,  .945}14.63\% & \cellcolor[rgb]{ .867,  .902,  .957}11.25\% & \cellcolor[rgb]{ .925,  .945,  .98}5.74\% & \cellcolor[rgb]{ .984,  .984,  .996}-0.04\% & \cellcolor[rgb]{ .984,  .984,  .996}-0.11\% & \cellcolor[rgb]{ .984,  .976,  .988}-0.42\% & \cellcolor[rgb]{ .98,  .824,  .831}-7.84\% & \cellcolor[rgb]{ .98,  .827,  .839}-7.53\% & \cellcolor[rgb]{ .98,  .8,  .812}-8.84\% & \cellcolor[rgb]{ .98,  .706,  .718}-13.25\% & \cellcolor[rgb]{ .98,  .702,  .714}-13.49\% & \cellcolor[rgb]{ .98,  .718,  .725}-12.83\% \\
    \cmidrule(r){1-3}
    \textbf{4} & \multicolumn{1}{l}{\textcolor[rgb]{ .502,  .502,  .502}{\textbf{first}}} & 6     & \cellcolor[rgb]{ .976,  .98,  .996}1.09\% &       &       &       &       &       & \cellcolor[rgb]{ .722,  .8,  .906}24.08\% &       &       &       &       &  \\
          & \textcolor[rgb]{ .502,  .502,  .502}{} & 9     & \cellcolor[rgb]{ .98,  .722,  .733}-12.59\% & \cellcolor[rgb]{ .925,  .945,  .98}5.66\% &       &       &       &       & \cellcolor[rgb]{ .98,  .808,  .816}-8.55\% & \cellcolor[rgb]{ .835,  .882,  .949}13.93\% &       &       &       &  \\
          & \textcolor[rgb]{ .502,  .502,  .502}{} & 12    & \cellcolor[rgb]{ .98,  .816,  .827}-8.11\% & \cellcolor[rgb]{ .957,  .965,  .988}3.04\% & \cellcolor[rgb]{ .98,  .984,  1}0.82\% &       &       &       & \cellcolor[rgb]{ .984,  .965,  .976}-1.02\% & \cellcolor[rgb]{ .984,  .98,  .992}-0.21\% & \cellcolor[rgb]{ .804,  .859,  .937}16.63\% &       &       &  \\
          & \textcolor[rgb]{ .502,  .502,  .502}{} & 15    & \cellcolor[rgb]{ .98,  .82,  .831}-7.90\% & \cellcolor[rgb]{ .969,  .973,  .992}2.01\% & \cellcolor[rgb]{ .984,  .922,  .933}-3.05\% & \cellcolor[rgb]{ .984,  .973,  .984}-0.67\% &       &       & \cellcolor[rgb]{ .984,  .976,  .988}-0.51\% & \cellcolor[rgb]{ .984,  .949,  .961}-1.72\% & \cellcolor[rgb]{ .878,  .914,  .965}9.94\% & \cellcolor[rgb]{ .804,  .859,  .937}16.75\% &       &  \\
          & \textcolor[rgb]{ .502,  .502,  .502}{} & 18    & \cellcolor[rgb]{ .98,  .816,  .824}-8.18\% & \cellcolor[rgb]{ .969,  .973,  .992}2.04\% & \cellcolor[rgb]{ .984,  .937,  .949}-2.28\% & \cellcolor[rgb]{ .984,  .969,  .98}-0.90\% & \cellcolor[rgb]{ .984,  .973,  .984}-0.60\% &       & \cellcolor[rgb]{ .984,  .965,  .976}-1.00\% & \cellcolor[rgb]{ .984,  .945,  .957}-1.90\% & \cellcolor[rgb]{ .863,  .902,  .957}11.50\% & \cellcolor[rgb]{ .8,  .859,  .937}16.96\% & \cellcolor[rgb]{ .816,  .867,  .941}15.68\% &  \\
          & \textcolor[rgb]{ .502,  .502,  .502}{} & 21    & \cellcolor[rgb]{ .98,  .824,  .835}-7.66\% & \cellcolor[rgb]{ .973,  .976,  .996}1.64\% & \cellcolor[rgb]{ .984,  .902,  .914}-3.96\% & \cellcolor[rgb]{ .984,  .941,  .953}-2.22\% & \cellcolor[rgb]{ .984,  .937,  .949}-2.36\% & \cellcolor[rgb]{ .984,  .957,  .969}-1.32\% & \cellcolor[rgb]{ .984,  .98,  .992}-0.35\% & \cellcolor[rgb]{ .984,  .925,  .933}-2.98\% & \cellcolor[rgb]{ .902,  .929,  .973}8.00\% & \cellcolor[rgb]{ .855,  .894,  .953}12.08\% & \cellcolor[rgb]{ .878,  .914,  .965}10.02\% & \cellcolor[rgb]{ .855,  .894,  .953}12.10\% \\
    \cmidrule(r){2-3}
          & \multicolumn{1}{l}{\textcolor[rgb]{ .502,  .502,  .502}{\textbf{second}}} & 6     & \cellcolor[rgb]{ .918,  .937,  .976}6.64\% &       &       &       &       &       & \cellcolor[rgb]{ .98,  .745,  .753}-11.54\% &       &       &       &       &  \\
          & \textcolor[rgb]{ .502,  .502,  .502}{} & 9     & \cellcolor[rgb]{ .827,  .875,  .945}14.80\% & \cellcolor[rgb]{ .863,  .898,  .957}11.58\% &       &       &       &       & \cellcolor[rgb]{ .98,  .737,  .749}-11.82\% & \cellcolor[rgb]{ .984,  .98,  .992}-0.20\% &       &       &       &  \\
          & \textcolor[rgb]{ .502,  .502,  .502}{} & 12    & \cellcolor[rgb]{ .753,  .824,  .918}21.42\% & \cellcolor[rgb]{ .812,  .863,  .937}16.05\% & \cellcolor[rgb]{ .882,  .914,  .965}9.82\% &       &       &       & \cellcolor[rgb]{ .984,  .89,  .902}-4.49\% & \cellcolor[rgb]{ .984,  .906,  .918}-3.83\% & \cellcolor[rgb]{ .98,  .776,  .788}-9.99\% &       &       &  \\
          & \textcolor[rgb]{ .502,  .502,  .502}{} & 15    & \cellcolor[rgb]{ .757,  .824,  .918}21.13\% & \cellcolor[rgb]{ .812,  .867,  .941}15.91\% & \cellcolor[rgb]{ .882,  .914,  .965}9.61\% & \cellcolor[rgb]{ .988,  .988,  1}0.00\% &       &       & \cellcolor[rgb]{ .984,  .886,  .898}-4.75\% & \cellcolor[rgb]{ .984,  .906,  .918}-3.78\% & \cellcolor[rgb]{ .98,  .769,  .776}-10.45\% & \cellcolor[rgb]{ .976,  .647,  .655}-16.16\% &       &  \\
          & \textcolor[rgb]{ .502,  .502,  .502}{} & 18    & \cellcolor[rgb]{ .757,  .827,  .922}21.03\% & \cellcolor[rgb]{ .816,  .867,  .941}15.72\% & \cellcolor[rgb]{ .886,  .918,  .965}9.24\% & \cellcolor[rgb]{ .988,  .988,  1}0.10\% & \cellcolor[rgb]{ .984,  .988,  1}0.46\% &       & \cellcolor[rgb]{ .984,  .882,  .894}-4.97\% & \cellcolor[rgb]{ .984,  .89,  .902}-4.63\% & \cellcolor[rgb]{ .98,  .765,  .776}-10.51\% & \cellcolor[rgb]{ .976,  .643,  .651}-16.32\% & \cellcolor[rgb]{ .976,  .663,  .675}-15.31\% &  \\
          & \textcolor[rgb]{ .502,  .502,  .502}{} & 21    & \cellcolor[rgb]{ .773,  .839,  .925}19.48\% & \cellcolor[rgb]{ .835,  .882,  .949}13.98\% & \cellcolor[rgb]{ .898,  .925,  .969}8.23\% & \cellcolor[rgb]{ .984,  .973,  .984}-0.69\% & \cellcolor[rgb]{ .984,  .973,  .984}-0.73\% & \cellcolor[rgb]{ .984,  .98,  .992}-0.35\% & \cellcolor[rgb]{ .98,  .827,  .839}-7.57\% & \cellcolor[rgb]{ .984,  .847,  .859}-6.54\% & \cellcolor[rgb]{ .98,  .714,  .722}-13.06\% & \cellcolor[rgb]{ .976,  .596,  .608}-18.51\% & \cellcolor[rgb]{ .976,  .604,  .616}-18.15\% & \cellcolor[rgb]{ .976,  .624,  .631}-17.31\% \\
    \bottomrule
    \end{tabular}%
    }
  \label{tab:asymm_pwr}%
\end{table}%

\Cref{tab:asymm_pwr} shows the above metrics for markets with 2,3, and 4 providers, and for various possibilities of data improvement. 

We can observe a few interesting trends:
\begin{enumerate}
    \item \textbf{$\Delta\mu$ from regular setting.} One of our central insights from  \cref{sec:exp:real} is that when $n=2$ providers, moving second is beneficial, and when $n>2$, the opposite is true, and moving first is better. In the case of investing in better data, and when comparing the provider's market gain vs. himself in a regular setting, we see an inverse effect.
    
    For $n=2$, better data creates market gains only when you are the first mover, as in certain lopsided markets the gain is $>12\%$. For example, in the case where the better-data provider has 15 features, and the other providers have 3 features, we can observe a 12\% gain, which measures the benefit of investing in the additional 12 features.  

    When the provider moves second, however, investing in more data does not translate to higher market share, in fact the provider loses significant market share, and would have been better off retaining the same primitive data as the other competitors.
    This phenomenom gets exacerbated further the more the provider invests in better data; For example, if one were to utilize all 21 features of the dataset when the competitors have access to only 9 features, the advatnaged provider would see a -12.35\% loss in market share.

    For markets where $n>2$, it is the other way around. When the advantaged provider moves first, he may see a decrease in market share from the regular setting where he didn't gain extra data; When moving 2nd, the data gain proves helpful. This stands in polar contrast to the case where $n=2$, and perhaps understandably so: It seems that wherever the providers have an initial advantage when the data is symmetrical, they would lose that advantage when investing in more data, perhaps hinting at the idea of decreasing marginal returns in investments.

    \item \textbf{$\Delta\mu$ from the next-best provider.} 
    When comparing the difference across data-variation experiments in market shares between providers, we notice that the trend behaves similarly to how we have seen in the order-of-play results of  \cref{sec:exp:real}. We can observe that,  interestingly, if a provider improved his absolute market-share relative to himself, this doesn't translate to the provider improving his market share relative to others. Take for example the cases where $n>2$ and the provider moves 2nd. As stated above and as can be seen in \cref{tab:asymm_pwr}, the added data advantage in this setting helps the provider gain in absolute market share.  The market gain relative to the other providers, however, has an inverse result, and in many cases the competitors end up with a better overall utility. This tells us that in these settings, when the advantaged provider goes second, the total welfare (as the sum of individual market shares) increases.
    
\end{enumerate}

\extended{
\todo{fix up table and make nicer}
\todo{Portray the metrics as a heat map on lower diagonal}
\todo{explain the results}
}


\end{document}